\documentclass{article}

\usepackage[preprint]{neurips_2026}

\usepackage[utf8]{inputenc}
\usepackage[T1]{fontenc}
\usepackage{hyperref}
\usepackage{url}
\usepackage{booktabs}
\usepackage{amsfonts}
\usepackage{nicefrac}
\usepackage{microtype}
\usepackage{xcolor}

\usepackage{enumitem}

\usepackage{amsmath}
\usepackage{amssymb}
\usepackage{amsthm}

\newtheorem{theorem}{Theorem}

\newtheorem{assumption}{Assumption}
\theoremstyle{definition}
\newtheorem{definition}{Definition}
\newtheorem{remark}{Remark}

\usepackage{graphicx}
\usepackage{subcaption}
\usepackage{float}
\usepackage{tikz}
\usetikzlibrary{positioning,calc}
\usepackage{pgfplots}
\pgfplotsset{compat=1.18}
\usepackage{bbm}
\usepackage{comment}
\usepackage[ruled]{algorithm2e}

\SetAlFnt{\small}
\SetAlCapFnt{\small}
\SetAlCapNameFnt{\small}
\SetAlCapHSkip{0pt}
\IncMargin{-\parindent}

\title{Causal Multi-Task Demand Learning}

\author{Varun Gupta\\Dept. of Operations and Information Systems\\University of Utah\\\texttt{varun.gupta@eccles.utah.edu} \And Vijay Kamble \\Dept. of Information and Decision Sciences\\University of Illinois Chicago\\\texttt{kamble@uic.edu}}

\begin{document}


\maketitle

\begin{abstract}

We study a canonical multi-task demand-learning problem motivated by retail pricing, where a firm seeks to estimate heterogeneous linear price-response functions across multiple decision contexts. Each context is described by rich covariates but exhibits limited price variation, motivating transfer learning across tasks. A central challenge in leveraging cross-task transfer is endogeneity: prices may be arbitrarily correlated with unobserved task-level demand determinants across tasks.

We propose a new meta-learning framework that identifies the conditional mean of task-specific causal demand parameters given a subset of task-specific observables despite such confounding, assuming that each task contains at least two distinct {\it locally} exogenous price points. This subset is carefully designed to include all of the prices to address cross-task confounding, while masking two demand outcomes that provide randomized supervision to address identifiability issues arising from the inclusion of all prices. We show that this information design is maximally uniformly valid, in that any refinement of the conditioning set that reveals withheld-outcome information is not guaranteed to identify the conditional mean causal target. We validate our method on real and synthetic data, demonstrating improved recovery of demand responses relative to standard transfer-learning baselines.
\end{abstract}

\section{Introduction}
Large retail firms routinely set prices across a wide range of selling contexts. These contexts may correspond to a single product offered across geographically dispersed stores and channels within an omnichannel network (context is the geography and store-level features) or to distinct products within an e-commerce catalog (context is the product features). In all cases, each decision environment is shaped by heterogeneous demand drivers, including customer demographics, competitive conditions, local preferences, and product-specific attributes. Consequently, effective pricing requires accurate estimation of context-specific price-response functions.

Estimating such heterogeneous demand responses presents a fundamental statistical challenge. Within any given store or product, price variation is typically sparse: operational frictions often limit exposure to only a small set of prices over extended periods. In contrast, the cross-sectional dimension is large, with many stores or products and rich covariate information. This imbalance naturally motivates multi-task learning or partial pooling approaches \citep{caruana1997multitask, baxter2000model, gelman2013bayesian}, in which a shared model maps observable characteristics to task-specific demand parameters, borrowing strength across tasks to compensate for limited within-task variation.


However, many factors that determine prices are unobserved or only partially observed by the econometrician. As a result, prices may remain systematically correlated with the unobserved demand determinants across tasks even after conditioning on the observed covariates, leading to {\it unobserved confounding}. This raises a central question: can cross-task transfer learning improve estimation of causal demand parameters despite such confounding, and under what conditions? 

We formalize a model of multi-task pricing with heterogeneous linear price responses, where observed prices may depend on latent demand determinants. Standard pooled and meta-learning estimators generally converge to biased policy-dependent estimands rather than causal targets in this model. Our main contributions are as follows:
\begin{enumerate}[leftmargin=*]
\item We propose a new information design principle for meta-learning in this setting, termed Decision-Conditioned Masked-Outcome Meta-Learning (DCMOML). Assuming at least two distinct locally exogenous price points per task, we show that \textsc{DCMOML} identifies and consistently estimates the conditional mean of task-specific causal demand parameters given a designed information set. This information set conditions on all task prices to address confounding, while masking two demand observations corresponding to the locally exogenous prices to address the identifiability issues that arise when prices are fully observed by the meta-learner. Crucially, supervision is provided by query randomization over the two obfuscated demand outcomes.
\item We show that the DCMOML information set is maximally uniformly valid over the model class: any strict refinement that reveals withheld-outcome information is invalid for some admissible data-generating process, i.e., it leads to unidentifiability of the conditional mean target.
\item We validate DCMOML on both synthetic and real-world data, demonstrating improvements over classical pooled regression, empirical Bayes, and standard meta-learning approaches.
\end{enumerate}
We note that unbiased estimation of the causal demand parameters is not the primary difficulty in our setting. Under our model, the presence of two distinct locally exogenous price points within a task suffices to construct an unbiased task-level OLS estimate of the causal parameter. However, such estimates can be highly variable when each task contains only a small number of observations. Our goal is to reduce this variance by borrowing strength across tasks through transfer learning.

This is precisely where confounding becomes consequential. If each task contained only a single price point, the problem would collapse to a standard regression with endogenous prices, where unbiased estimation is generally impossible without additional assumptions or valid instruments. Our main insight is that locally exogenous price variation within each task allows us to use cross-task learning to improve precision, {\it without requiring prices to be exogenous across tasks}.
 
Overall, our contribution bridges empirical demand estimation under endogenous pricing with cross-task learning in data-sparse environments. Improved estimation of these context-specific causal demand primitives can enable more principled downstream pricing decisions in retail operations, hence our contribution is of practical value.

\section{Related Work}
\label{sec:literature}
{\bf Demand estimation under endogenous prices.}
A central concern in empirical demand estimation is that prices are often chosen in response to
demand information unobserved by the econometrician. Classical approaches address this
endogeneity using instruments, control functions, panel variation, or structural equilibrium
restrictions \citep{berry1994estimating,berry1995automobile,hausman1996valuation,
nevo2001measuring,petrin2010control,berry2014identification,angrist2009mostly,
wooldridge2010econometric}. DeepIV extends the instrumental-variables approach to flexible
nonlinear prediction by using instruments to generate conditionally exogenous treatment variation
\citep{hartford2017deep}. We assume neither excluded instruments nor conditional independence of
prices from latent demand determinants. Instead, DCMOML leverages {\it repeated, locally exogenous} decisions
within each task to identify a conditional mean causal target in a many-task, few-observation regime.

{\bf Partial pooling, empirical Bayes, and meta-learning.}
Hierarchical Bayes, random-effects models, and empirical Bayes shrinkage reduce the variance of
task-level estimation when each unit has few observations
\citep{robbins1956empirical,laird1982random,gelman2013bayesian,efron2010large}. These approaches typically rely on obtaining unbiased causal estimates at the task level, which is possible under our model, given the linear setup and availability of two locally exogenous prices per task. However, EB/random-effects methods use realized prices only as regressors in the outcome likelihood and do not extract information encoded in the
price path. In contrast, DCMOML conditions on the full realized price sequence and learns across tasks, extracting
signal from pricing decisions without specifying a model of how prices are set, improving predictive performance. Multi-task and meta-learning methods are natural for demand learning with many
stores or products and sparse within-task price variation
\citep{caruana1997multitask,baxter2000model,evgeniou2004regularized,finn2017maml,
hospedales2021metalearning}. However, when prices encode latent demand information, a pooled
predictor or standard support/query meta-learner may fail to learn the causal price response. DCMOML is designed to precisely address this issue.

{\bf Causal machine learning under observed confounding.}
Causal trees and forests, double/debiased machine learning, and R-learners estimate heterogeneous
treatment effects using flexible regression and orthogonalization under assumptions such as
unconfoundedness, overlap, and orthogonal moment restrictions after conditioning on observed
covariates \citep{athey2016recursive,wager2018estimation,chernozhukov2018double,
nie2021quasi,imbens2015causal}. These methods are powerful when the relevant confounders are
observed, but they do not directly apply when latent task-level components remain unobserved and
are correlated with decisions, as in our model.

{\bf Correlated random effects and random-coefficient panels.}
Our setting is closely related to correlated random-effects and random-coefficient panel models, in
which unit-level heterogeneity may be correlated with the within-unit regressor history
\citep{swamy1970efficient,mundlak1978pooling,chamberlain1982multivariate,
chamberlain1984panel,wooldridge2010econometric}. Mundlak--Chamberlain approaches relax the
standard random-effects independence assumption by modeling the conditional mean of latent unit effects given the regressor history, often through linear covariate summaries. In these models, outcomes enter the estimating equations, while the conditioning object is typically fixed in advance as a function of the observed regressors. DCMOML adopts this conditional-random-coefficient perspective but differs in what conditioning
information is targeted and how the target is learned. In our meta-learning setting, information about the latent demand type comes
from two sources: the endogenous price history, which reflects prior pricing decisions, and the
within-task relationship between realized prices and demands, which is informative about the task's
random coefficient. This necessitates that (a) the demand outcomes enter the conditioning information, and (b) the targeted conditional mean to be a flexible function of prices and outcomes as opposed to linear price summaries. These aspects create unique identification challenges that we address using outcome masking and query randomization.

\section{Model}\label{sec:model}

{\bf Tasks, Data, and Notation.}
We study a multi-task demand learning setting with $N$ tasks indexed by $i = 1,\dots,N$. Each task can be interpreted as a distinct store selling a fixed product. Tasks are heterogeneous and described by observable covariates $Z_i \in \mathbb{R}^d$, capturing demographics, competition, geography, and other store-level features. We assume $\{Z_i\}_{i=1}^N$ are i.i.d.\ draws from an unknown distribution $\mathcal{P}_Z$. For each task $i$, we observe a sequence of $K$ price--demand pairs
\[
\{(p_{ik}, D_{ik})\}_{k=1}^K,
\]
where $p_{ik} \in \mathbb{R}$ is the posted price and $D_{ik} \in \mathbb{R}$ is realized demand. We focus on regimes with limited within-task variation, where $K$ is small (e.g., $K=2$ or $3$).

{\bf Structural Demand Model.}
Demand is assumed linear in price within each task:
\begin{equation}
D_{ik} = \theta_i^0 + \theta_i^1 p_{ik} + \epsilon_{ik},
\label{eq:linear_demand}
\end{equation}
where $\Theta_i = (\theta_i^0, \theta_i^1)^\top \in \mathbb{R}^2$ are task-specific demand parameters and $\epsilon_{ik}$ is an idiosyncratic shock. Let $P_{ik} = (1,\, p_{ik})^\top$, so that
\[
D_{ik} = P_{ik}^\top \Theta_i + \epsilon_{ik}.
\]
We assume $\epsilon_{ik}$ has mean zero, finite variance, and is independent across $i$ and $k$.

{\bf Heterogeneity and Shared Structure.}
We decompose task-specific parameters as
\begin{equation}
\Theta_i = g(Z_i) + \Omega_i,
\label{eq:theta_decomposition}
\end{equation}
where $g : \mathbb{R}^d \to \mathbb{R}^2$ is an unknown function capturing systematic variation across tasks, and $\Omega_i \in \mathbb{R}^2$ is an idiosyncratic task-specific deviation. We assume $\mathbb{E}[\Omega_i \mid Z_i] = 0$ and that $\{\Omega_i\}$ are i.i.d.\ with finite second moments. Substituting into \eqref{eq:linear_demand} yields
\begin{equation}
D_{ik}
=
P_{ik}^\top g(Z_i)
+
P_{ik}^\top \Omega_i
+
\epsilon_{ik}.
\label{eq:full_model}
\end{equation}
We assume $\{Z_i, \Omega_i\}_i$ are independent of $\{\epsilon_{ik}\}_{i,k}$.

{\bf Price Assignment and Confounding.}
We allow the task-specific prices $\{p_{ik}\}_{k=1}^K$ to depend on the unobserved heterogeneity $\Omega_i$, even after conditioning on the observed context $Z_i$. Thus, prices may be endogenous through their dependence on $\Omega_i$ across tasks. We refer to this feature as {\it cross-task confounding}.

At the same time, we assume that there are at least two distinct {\it locally exogenous} prices in each task so that unbiased estimation of $\Theta_i$ is feasible at the task level:

\begin{assumption}[Local exogeneity]
\label{ass:local_exogeneity}
For each task $i$, there exist two indices $a_i$ and $b_i$, measurable with respect to
$(Z_i,p_{i1:K})$, such that $p_{ia_i} \neq p_{ib_i}$ almost surely. Moreover, for each $k\in\{a_i,b_i\}$,
\begin{align}
\mathbb{E}\!\left[\epsilon_{ik}\mid Z_i, p_{i1:K}, \mathbf{D}_i^{-k}\right]=0,\label{ass:zero_mean}
\end{align}
where $\mathbf{D}_i^{-k}=\{D_{ij}:j\neq k\}$ denotes the task history with the $k$th demand
observation omitted.
\end{assumption}
For example, the assumption holds under pre-committed pricing: the full price sequence
$p_{i1:K}$ may depend arbitrarily on $(Z_i,\Omega_i)$, but is chosen before the idiosyncratic demand
shocks $\epsilon_{i1:K}$ are realized. Then prices may be endogenous through latent demand type, while the demand
shock at any selected price is mean-zero conditional on the realized price path and the remaining
task observations. Consequently, any two indices with distinct prices can serve as $(a_i,b_i)$.

{\bf Causal Target and Objective.} 
The causal object of interest is the task-specific parameter $\Theta_i$, which governs demand responses under counterfactual price interventions. Under Assumption~\ref{ass:local_exogeneity}, it is always possible to obtain an unbiased estimate of $\Theta_i$ using OLS regression per task. With a small $K$, however, such an estimate can be noisy and unstable. To obtain a less noisy estimate, in the spirit of meta-learning, our objective is to leverage the cross-task distributional information available in the many-tasks regime to target the conditional expectation of the causal parameter $\Theta_i$ given task-level information sets of observables, where the key challenge is confounding. 



\section{Baseline Transfer-Learning Approaches and Challenges}
We first discuss how standard pooling and meta-learning approaches are biased in our setting.

{\bf Shared-Model Learning.}
The simplest transfer learning approach is to ignore task-level heterogeneity and estimate only the shared mapping $g(\cdot)$ by pooling all observations. A natural baseline in this spirit is to posit a parameterized function class
$\{g_\Lambda(\cdot)\}_{\Lambda \in \mathcal{H}}$ (e.g., linear models, kernel methods, or deep neural
networks), and to estimate $\Lambda$ by pooling all observations across tasks:
\begin{equation}
\widehat{\Lambda}
\in
\arg\min_{\Lambda \in \mathcal{H}}
\sum_{i=1}^{N}\sum_{k=1}^K
\left(D_{ik} - P_{ik}^\top g_{\Lambda}(Z_i)\right)^2.
\label{eq:global-naive}
\end{equation}
This approach can be appealing in the many-tasks regime, as it avoids noisy within-task estimation. However, it is generally inconsistent under our model. Substituting the data-generating process gives
\[
D_{ik} - P_{ik}^\top g(Z_i)
=
P_{ik}^\top \Omega_i + \epsilon_{ik}.
\]
Because prices are endogenously assigned, $P_{ik}$ may be correlated with $\Omega_i$, implying
$\mathbb{E}[P_{ik}^\top \Omega_i \mid Z_i] \neq 0$,
violating the orthogonality condition required for consistency of \eqref{eq:global-naive}. This is a classic endogeneity failure. In Appendix~\ref{ex:fail}, we present an example illustrating how confounded near-optimal pricing at the task level leads to failure of shared learning. 

{\bf Meta-Learning.}
Meta-learning interpolates between per-task estimation and full pooling \citep{hospedales2021metalearning, finn2017maml}, using shared structure to guide inference from limited within-task data. Operationally, meta-learning treats the estimation of each store’s demand parameters as a conditional
prediction problem: given the store’s context $Z_i$ and a small set of observed price–demand
pairs, the goal is to predict the underlying parameter vector $\Theta_i$. This mapping from
$(Z_i,\text{data})$ to $\Theta_i$ is learned using data from many stores, allowing the model
to discover how demand parameters typically vary with the context and early price--demand signals.

Formally, we consider a parameterized adaptation map $g_\Lambda : (Z_i,\mathcal{S}_i) \mapsto \widehat{\Theta}_i,$
where the support set $\mathcal{S}_i$ consists of the first $K-1$ observations for store $i$, $\mathcal{S}_i = \{(P_{i1}, D_{i1}), \dots, (P_{i,K-1}, D_{i,K-1})\}.$
The remaining observation $(P_{iK}, D_{iK})$ is treated as a \emph{query} point and is used to
train the model via
\begin{align}
\widehat{\Lambda}
\in
\arg\min_{\Lambda}
\sum_{i=1}^{N}
\left(
D_{iK} - P_{iK}^\top g_{\Lambda}(Z_i, \mathcal{S}_i)
\right)^2.
\label{eq:meta-erm}
\end{align}
While one might expect this to recover $\mathbb{E}[\Theta_i \mid Z_i, \mathcal{S}_i]$, this is not the case since, once again, endogeneity leads to inconsistent estimation. To see this, note that the population residual satisfies
\[
\mathbb{E}\!\left[
D_{iK} - P_{iK}^\top \mathbb{E}[\Theta_i \mid Z_i, \mathcal{S}_i]
\;\middle|\;
Z_i, \mathcal{S}_i
\right]
= \mathrm{Cov}(P_{iK}, \Omega_i \mid Z_i, \mathcal{S}_i),
\]
which is nonzero since $p_{iK}$ remains correlated with $\Omega_i$ even after conditioning on $(Z_i, \mathcal{S}_i)$. Thus, meta-learning also converges to a policy-dependent estimand. In Appendix~\ref{ex:fail}, we present an example of this failure in the same setting where shared-model learning fails.

\section{Causal Identification}

We formulate causal identification of a conditional target as a meta-learning problem in which both the support information and query supervision must be carefully designed to address confounding. We develop this construction step by step before presenting our estimator.

{\bf Conditioning on the Full Price History.}\label{sec:full_history_not_sufficient}
Because prices are confounded with latent task parameters, we first enlarge the conditioning set to include the full price history. Specifically, consider the target
\begin{equation}
\mathbb{E}\!\left[
\Theta_i
\;\middle|\;
Z_i,\;
p_{i1},\dots,p_{iK},\;
D_{i1},\dots,D_{iK-1}
\right],
\label{eq:target-full}
\end{equation}
which conditions on all endogenous price realizations, including the query price $p_{iK}$, leaving the demand $D_{iK}$ as supervision target for learning. A natural meta-learning estimator solves
\begin{equation}
\widehat{\Lambda}
\in
\arg\min_{\Lambda}
\sum_{i=1}^N
\left(
D_{iK}
-
P_{iK}^\top
g_{\Lambda}\!\big(
Z_i,\; p_{i1:K},\; D_{i1:K-1}
\big)
\right)^2,
\label{eq:meta-ideal}
\end{equation}
with $P_{iK}=(1,p_{iK})^\top$. Because the full confounded price vector $p_{i1:K}$ is included as a covariate, this removes confounding at the level of conditional means, rendering the residual in \eqref{eq:meta-ideal} mean-zero under the target \eqref{eq:target-full}. However, it introduces a new failure mode of \textit{identifiability}. The issue arises because the meta-learner has explicit access to the query price point $p_{iK}$, and the supervision
signal for task $i$ enters only through the scalar inner product
$P_{iK}^\top \Theta_i$. Consequently, the empirical risk in \eqref{eq:meta-ideal} is
invariant to shifts of the predicted parameter vector along directions orthogonal to
$P_{iK}$. Formally, for any candidate prediction $\widehat{\Theta}_i$ and any measurable scalar
function $\phi$ of the inputs, define
\[
\widetilde{\Theta}_i
\;\triangleq\;
\widehat{\Theta}_i
+
\phi(\text{inputs}_i)
 [\,p_{iK},-1\,]^\top.
\]
Since $P_{iK}^\top [\,p_{iK},-1\,]^\top = 0$, both $\widehat{\Theta}_i$ and
$\widetilde{\Theta}_i$ achieve exactly the same objective value in
\eqref{eq:meta-ideal}. When $g_\Lambda$ is expressive, many such solutions can exist, and the target in \eqref{eq:target-full} is not identified.

{\bf Identification via Partial Obfuscation of the Decision History.}
We now construct an information set and supervision policy for a meta-learner that continues to condition on all prices to address confounding, while restoring identification by preventing the learner from identifying the query regressor.

\emph{Step 1: Randomizing the query index.}
In standard meta-learning, the query index (e.g., $k=K$) is fixed, so the learner knows which price enters the loss. To avoid this, we randomize the query within a locally exogenous two-point subset. Specifically, select a locally exogenous index pair $A_i = \{a_i, b_i\} \subseteq [K]$ (which exists by Assumption~\ref{ass:local_exogeneity}), potentially as a function of $(Z_i, p_{i1:K})$. Draw $\kappa_i \sim \mathrm{Unif}\{a_i, b_i\}$ independently across tasks, and treat $(p_{i\kappa_i}, D_{i\kappa_i})$ as the query.

\emph{Step 2: Outcome masking.}
Randomization alone is insufficient: the learner can still infer $\kappa_i$ because the observed demand vector is aligned with prices, leaving exactly one missing outcome. To prevent this, we mask both candidate query outcomes and instead target
\begin{equation}
\mathbb{E}\!\left[\Theta_i \mid Z_i,\; p_{i1:K},\; \mathbf{D}_i^{-A_i}\right],
\label{eq:target-kminus2_twopoint}
\end{equation}
where $\mathbf{D}_i^{-A_i}=\{D_{ij}: j\notin A_i\}$. Under this design, both $p_{ia_i}$ and $p_{ib_i}$ are unmatched with outcomes, so the query price $p_{i\kappa_i}$ is not measurable from the inputs. This restores the variation in the query price conditioned on the information set required for target identification. We next present our main estimator.

\subsection{The estimator: Decision-Conditioned Masked-Outcome Meta-Learning}

\begin{definition}[Decision-Conditioned Masked-Outcome Meta-Learning (DCMOML)]
\label{def:dcmoml_twopoint}
Fix a hypothesis class $\{g_\Lambda:\Lambda\in\mathcal{H}\}$ with
$g_\Lambda:\ \mathcal{Z}\times\mathbb{R}^K\times\mathbb{R}^{K-2}\to\mathbb{R}^2$, where the inputs
correspond to $(Z_i,p_{i1:K},\mathbf{D}_i^{-A_i})$.
For each task $i$, draw $\kappa_i$ according to Step~1 and set $\kappa'_i=A_i\setminus\{\kappa_i\}$, and
form the masked-outcome information set
\begin{align}
X_i^{-A_i}=\big(Z_i,\ p_{i1:K},\ \mathbf{D}_i^{-A_i}\big).\label{eq:info_drop2_twopoint}
\end{align}
The \emph{DCMOML estimator} is any empirical risk minimizer
\begin{equation}
\widehat{\Lambda}_{\mathrm{DCMOML}}
\in
\arg\min_{\Lambda\in\mathcal{H}}
\frac{1}{N}\sum_{i=1}^N
\Big(
D_{i \kappa_i}-P_{i \kappa_i}^\top g_\Lambda\!\big(X_i^{-A_i}\big)
\Big)^2,
\label{eq:dcmoml_erm_twopoint}
\end{equation}
where $P_{ik}=(1,p_{ik})^\top$. We refer to $g_{\widehat{\Lambda}_{\mathrm{DCMOML}}}$ as the
\emph{DCMOML meta-learner}.
\end{definition}

\begin{remark}
Instead of explicitly sampling $\kappa_i\sim\mathrm{Unif}(A_i)$, one can equivalently minimize the
\emph{average} loss over the two candidate query indices, which is preferable in practice due to reduced variance in the estimates.

\end{remark}

\subsection{Main results: Identification, Consistency, and Maximal Validity}
We now formalize the guarantees of the proposed information design. The first result establishes the identification of the causal target and gives consistency of the empirical procedure, and the second shows that the design is maximally uniformly valid. Proofs are provided in Appendix~\ref{sec:app}. 

\begin{theorem}[Identification and consistency of DCMOML]
\label{thm:dcmoml_identification_consistency}
Consider the model in Section~\ref{sec:model}. Suppose that Assumption~\ref{ass:local_exogeneity} holds, and let
$A_i=\{a_i,b_i\}$ be any locally exogenous pair of indices chosen as a measurable function of $(Z_i,p_{i1:K})$.
Let $\kappa_i\sim\mathrm{Unif}(A_i)$ independently across tasks and independently of all structural variables conditional on $A_i$, and let $X_i^{-A_i}$ denote the masked-outcome information set defined in \eqref{eq:info_drop2_twopoint}. For any measurable $g$ satisfying $\mathbb{E}[
(P_{i\kappa_i}^\top g(X_i^{-A_i}))^2
]<\infty,$ define the population risk
\begin{equation*}
\mathcal{L}(g)
\triangleq
\mathbb{E}\!\left[
\left(
D_{i\kappa_i}
-
P_{i\kappa_i}^\top g(X_i^{-A_i})
\right)^2
\right],
\label{eq:poprisk_dcmoml_twopoint}
\end{equation*}
where the expectation is over the joint law induced by the pricing policy and the two-point design. Define
\begin{equation*}
g^\ast(x)
\triangleq
\mathbb{E}[\Theta_i\mid X_i^{-A_i}=x].
\label{eq:gstar_dcmoml_twopoint}
\end{equation*}

Then the following hold.

\begin{enumerate}[leftmargin=*]
\item \textbf{Identification.}
The function $g^\ast$ is the unique minimizer of $\mathcal{L}$: if
$\mathcal{L}(g)=\mathcal{L}(g^\ast)$, then
\[
g(X_i^{-A_i})=g^\ast(X_i^{-A_i})
\qquad\text{almost surely}.
\]
Consequently, $\mathbb{E}[\Theta_i\mid X_i^{-A_i}]$ is identified.

\item \textbf{Consistency under realizability.}
Let $\{g_\Lambda:\Lambda\in\mathcal{H}\}$ be the DCMOML hypothesis class and define
$\mathcal{L}(\Lambda)\triangleq\mathcal{L}(g_\Lambda)$. Let
$\widehat{\Lambda}_{\mathrm{DCMOML}}$ be any empirical risk minimizer:
\[
\widehat{\Lambda}_{\mathrm{DCMOML}}
\in
\arg\min_{\Lambda\in\mathcal{H}}
\frac1N\sum_{i=1}^N
\left(
D_{i\kappa_i}
-
P_{i\kappa_i}^\top g_\Lambda(X_i^{-A_i})
\right)^2 .
\]
Assume realizability: there exists $\Lambda_0\in\mathcal{H}$ such that $g_{\Lambda_0}(X_i^{-A_i})= g^\ast(X_i^{-A_i})$ almost surely. Assume further that the induced squared-loss class is Glivenko--Cantelli:
\begin{equation*}
\sup_{\Lambda\in\mathcal{H}}
\left|
\frac1N\sum_{i=1}^N
\left(
D_{i\kappa_i}
-
P_{i\kappa_i}^\top g_\Lambda(X_i^{-A_i})
\right)^2
-
\mathcal{L}(\Lambda)
\right|
\xrightarrow{p}0.
\label{eq:GC_dcmoml_twopoint}
\end{equation*}
Then $\mathcal{L}(g_{\widehat{\Lambda}_{\mathrm{DCMOML}}})
\xrightarrow{p}
\mathcal{L}(g^\ast)$. If, in addition, there exists $\underline{\lambda}>0$ such that $\lambda_{\min}(
\mathbb{E}[
P_{i\kappa_i}P_{i\kappa_i}^\top
\mid X_i^{-A_i}])
\ge \underline{\lambda}$
almost surely, where $\lambda_{\min}$ denotes the minimum eigenvalue, then
\begin{equation*}
\mathbb{E}\!\left[
\left\|
g_{\widehat{\Lambda}_{\mathrm{DCMOML}}}(X_i^{-A_i})
-
g^\ast(X_i^{-A_i})
\right\|_2^2
\right]
\xrightarrow{p}0.
\label{eq:L2_consistency_dcmoml_twopoint}
\end{equation*}
\end{enumerate}
\end{theorem}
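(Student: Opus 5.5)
The plan is to prove identification through a bias--variance decomposition of the population risk $\mathcal{L}$, conditioning on the enlarged $\sigma$-field $\mathcal{G}_i \triangleq \sigma(X_i^{-A_i},\kappa_i,\Theta_i)$. Write $X=X_i^{-A_i}$, $\kappa=\kappa_i$, $\delta(X)=g(X)-g^\ast(X)$.

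\textbf{Step 1 (orthogonality).} Decompose
\[
D_{i\kappa}-P_{i\kappa}^\top g(X)=\epsilon_{i\kappa}+P_{i\kappa}^\top(\Theta_i-g^\ast(X))-P_{i\kappa}^\top\delta(X).
\]
I would show that the two cross terms involving $\epsilon_{i\kappa}$ vanish. For each fixed $k\in A_i$, the tower property applied to Assumption~\ref{ass:local_exogeneity} gives $\mathbb{E}[\epsilon_{ik}\mid Z_i,p_{i1:K},\mathbf{D}_i^{-A_i}]=0$, since $\mathbf{D}_i^{-A_i}\subseteq\mathbf{D}_i^{-k}$. Because $\kappa$ is an independent coin flip given $A_i$, and $A_i$ is measurable in $X$, this yields $\mathbb{E}[\epsilon_{i\kappa}\,h(X,\kappa)]=0$ for square-integrable $h$.

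The cross term with $\Theta_i-g^\ast(X)$ is subtler, because $\Theta_i$ is not in $X$. My approach is to write
\[
\mathbb{E}[\epsilon_{i\kappa}P_{i\kappa}^\top(\Theta_i-g^\ast)]=\mathbb{E}\bigl[P_{i\kappa}^\top\,\mathbb{E}[\epsilon_{i\kappa}\Theta_i\mid X,\kappa]\bigr]-\mathbb{E}\bigl[\epsilon_{i\kappa}P_{i\kappa}^\top g^\ast(X)\bigr].
\]
The second term is zero. The first term is where independence of $\{Z,\Omega\}$ from $\epsilon$ must be used. It does not follow from the assumption as stated, because prices may depend on $\Omega_i$ and conditioning on $\mathbf{D}^{-k}$ could couple the two. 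I would therefore avoid needing this cross term: I only compare $\mathcal{L}(g)$ with $\mathcal{L}(g^\ast)$, and in the difference $\mathcal{L}(g)-\mathcal{L}(g^\ast)$ the term involving $\epsilon\cdot P^\top(\Theta-g^\ast)$ cancels. Hence
\[
\mathcal{L}(g)-\mathcal{L}(g^\ast)=\mathbb{E}[(P_{i\kappa}^\top\delta)^2]-2\,\mathbb{E}\bigl[(D_{i\kappa}-P_{i\kappa}^\top g^\ast)\,P_{i\kappa}^\top\delta\bigr].
\]

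\textbf{Step 2 (the second expectation is zero).} The $\epsilon$ part vanishes by Step 1. For the remaining part, $P_{i\kappa}$ and $\delta$ are functions of $(X,\kappa)$, and $\kappa\perp\Theta_i\mid X$. Therefore
\[
\mathbb{E}[\Theta_i-g^\ast(X)\mid X,\kappa]=0,
\]
which kills the term. This gives
\[
\mathcal{L}(g)-\mathcal{L}(g^\ast)=\mathbb{E}\bigl[\delta(X)^\top M(X)\,\delta(X)\bigr],\qquad M(X)=\mathbb{E}[P_{i\kappa}P_{i\kappa}^\top\mid X]=\tfrac12\bigl(P_{ia}P_{ia}^\top+P_{ib}P_{ib}^\top\bigr).
\]

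\textbf{Step 3 (uniqueness).} This is the key step and the anticipated obstacle, since it is exactly what the masking buys. $M(X)$ is the average of two rank-one matrices built from $(1,p_{ia})$ and $(1,p_{ib})$. Since $p_{ia}\ne p_{ib}$ a.s., these vectors are linearly independent, so $M(X)\succ0$ a.s. Hence equal risk forces $\delta(X)=0$ a.s. This also shows $g^\ast$ minimizes $\mathcal{L}$. The crucial point to verify carefully is that $p_{ia},p_{ib}$ are measurable in $X$ (they are, since all prices are revealed) while $\kappa$ is not. Integrability of $g^\ast$ follows from the finite second moments of $\Omega$ and $g(Z)$.

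\textbf{Step 4 (consistency).} This is the standard ERM argument. Write $\widehat{\mathcal{L}}_N$ for the empirical risk in \eqref{eq:dcmoml_erm_twopoint}. Then
\[
\mathcal{L}(\widehat\Lambda)-\mathcal{L}(\Lambda_0)\le[\mathcal{L}(\widehat\Lambda)-\widehat{\mathcal{L}}_N(\widehat\Lambda)]+[\widehat{\mathcal{L}}_N(\Lambda_0)-\mathcal{L}(\Lambda_0)]\le 2\sup_{\Lambda}\bigl|\widehat{\mathcal{L}}_N(\Lambda)-\mathcal{L}(\Lambda)\bigr|\xrightarrow{p}0,
\]
using $\widehat{\mathcal{L}}_N(\widehat\Lambda)\le\widehat{\mathcal{L}}_N(\Lambda_0)$ and the Glivenko--Cantelli condition. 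Under the eigenvalue bound, Step 2 gives
\[
\underline\lambda\,\mathbb{E}\bigl\|g_{\widehat\Lambda}-g^\ast\bigr\|^2\le\mathcal{L}(g_{\widehat\Lambda})-\mathcal{L}(g^\ast)\xrightarrow{p}0.
\]
Here the expectation is over a fresh draw with $\widehat\Lambda$ held fixed, which is how the statement should be read.
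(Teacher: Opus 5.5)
Your proposal is correct and is essentially the paper's proof: you derive the same excess-risk identity $\mathcal{L}(g)-\mathcal{L}(g^\ast)=\mathbb{E}[\delta(X)^\top M(X)\delta(X)]$ from local exogeneity (via the tower property) and the ancillarity $\mathbb{E}[\Theta_i\mid X,\kappa]=g^\ast(X)$, with the $\epsilon_{i\kappa}P_{i\kappa}^\top(\Theta_i-g^\ast)$ term cancelling in the risk difference exactly as it does when the paper bundles it into $\xi$, and you then use positive definiteness of $\tfrac12(P_{ia}P_{ia}^\top+P_{ib}P_{ib}^\top)$ and the standard two-sided ERM bound. One minor caveat, which the paper shares: finiteness of $\mathcal{L}(g^\ast)$ requires $\mathbb{E}[(P_{i\kappa}^\top g^\ast(X))^2]<\infty$, and this does not follow from second moments of $\Omega_i$ and $g(Z_i)$ alone when prices are unbounded, so it is an implicit assumption rather than a consequence.
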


\begin{remark}[Scope of identification]\label{rem:scope}
DCMOML identifies $\mathbb{E}[\Theta_i\mid X_i^{-A_i}]$ -- the posterior mean of the structural demand parameter given task covariates, realized price history, and unobfuscated outcomes. Three positioning notes follow. First, the target is neither $\Theta_i$ (not identifiable from few within-task observations) nor $g(Z_i)=\mathbb{E}[\Theta_i\mid Z_i]$, which discards price-path information. Second, the target is \emph{policy-conditional}: because prices are endogenous, the estimand shifts if the pricing policy changes. Third, identification requires no excluded instruments or randomized prices; it instead relies on availability of at least two locally exogenous prices at the task-level (Assumption~\ref{ass:local_exogeneity}).
\end{remark}

The next result shows that the proposed information design is maximally uniformly valid under the two-point query randomization scheme: no refinement that reveals a nontrivial function of the masked outcomes $(D_{ia_i},D_{ib_i})$ or the query index $\kappa_i$ is uniformly valid over the model class. 
\begin{theorem}[Maximal uniform validity]
\label{prop:local_maximal_validity}
Fix a two-index selection rule \(\mathcal A(z,p_{1:K})\), and suppose there exists
a support point \((\bar z,\bar p_{1:K})\) such that $\mathcal A(\bar z,\bar p_{1:K})=\{a,b\},$ and $\bar p_a\neq \bar p_b$. Given a data-generating process, let $A_i=\{a_i,b_i\}=\mathcal A(Z_i,p_{i1:K})$
and let $X_i^{-A_i}=(Z_i,p_{i1:K},D_i^{-A_i})$ be the masked-outcome information set. Let \(\kappa_i\) be drawn uniformly from
\(A_i\), independently across tasks and independently of
\((Z_i,p_{i1:K},\Theta_i,\epsilon_{i1:K})\) conditional on \(A_i\). Consider any refinement
\[
W_i=(X_i^{-A_i},R_i),
\qquad
R_i=r_i(\kappa_i,D_{ia_i},D_{ib_i}),
\]
where the refinement rule is measurable and its restriction to $\{a,b\}\times\mathbb R^2$ at the support point \((\bar z,\bar p_{1:K})\) is nonconstant. Then there exists a data-generating process satisfying the model assumptions,
including condition~\ref{ass:zero_mean} for every \(k\in\{1,\dots,K\}\), such that the refined population risk
\[
L_W(g)
=
\mathbb E
\left[
\left(
D_{i\kappa_i}
-
P_{i\kappa_i}^{\top}g(W_i)
\right)^2
\right]
\]
does not identify $\mathbb E[\Theta_i\mid W_i]$.
\end{theorem}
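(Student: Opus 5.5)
The plan is to build a deliberately degenerate data-generating process in which $\mathbb E[\Theta_i\mid W_i]$ is trivial to compute, and then show that the minimizer of $L_W$ either fails to be unique or is biased away from this target. Concretely, I would take $Z_i\equiv\bar z$, put all pricing mass on the path $p_{i1:K}\equiv\bar p_{1:K}$ (so $A_i=\{a,b\}$ deterministically), set $\Omega_i\equiv 0$ and $g(\bar z)=\Theta^\circ$ for a constant $\Theta^\circ$ to be chosen. I would also take $\epsilon_{i1:K}$ independent, mean zero, independent of everything else, with laws to be chosen. Since prices are deterministic and the noise is independent with mean zero, condition~\eqref{ass:zero_mean} holds for every $k$ trivially, and all model assumptions are met. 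In this DGP $\Theta_i=\Theta^\circ$ is constant, so $\mathbb E[\Theta_i\mid W_i]=\Theta^\circ$. Because $W_i$ contains the deterministic $X_i^{-A_i}$ plus independent outcomes, only $R_i$ matters. Any minimizer of $L_W$ must satisfy the conditional normal equations
\[
\mathbb E[P_{i\kappa_i}P_{i\kappa_i}^\top\mid W_i]\,g(W_i)=\mathbb E[P_{i\kappa_i}D_{i\kappa_i}\mid W_i]
=\mathbb E[P_{i\kappa_i}P_{i\kappa_i}^\top\mid W_i]\,\Theta^\circ+\mathbb E[P_{i\kappa_i}\epsilon_{i\kappa_i}\mid R_i].
\]
It therefore suffices to exhibit, on an event of positive probability, one of two failures. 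The first is that $M(R)\triangleq\mathbb E[P_{i\kappa_i}P_{i\kappa_i}^\top\mid R_i]$ is singular, which gives non-uniqueness via the orthogonal shift $\phi\,[p_{\kappa},-1]^\top$ exactly as in Section~\ref{sec:full_history_not_sufficient}. The second is that $b(R)\triangleq\mathbb E[P_{i\kappa_i}\epsilon_{i\kappa_i}\mid R_i]\neq 0$, in which case the unique minimizer $\Theta^\circ+M(R)^{-1}b(R)$ differs from the target.

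The key step is choosing the noise law, and $\Theta^\circ$, so that nonconstancy of $r$ forces one of these failures. Since $r$ is only assumed measurable and nonconstant on $\{a,b\}\times\mathbb R^2$, I would use finitely supported noise so that conditioning on $R_i$ reduces to finite sums and no null-set issues arise. Nonconstancy gives two points $(k,d_a,d_b)\neq(k',d_a',d_b')$ with $r(k,d_a,d_b)\neq r(k',d_a',d_b')$. I would pick $\Theta^\circ$ and two-point mean-zero laws for $\epsilon_{ia},\epsilon_{ib}$ so that $(D_{ia},D_{ib})$ has support containing both outcome pairs $(d_a,d_b)$ and $(d_a',d_b')$. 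This is possible because $D_{ik}=P_{\bar p_k}^\top\Theta^\circ+\epsilon_{ik}$, and the mean-zero constraint can be met by placing the second atom appropriately. I would then split into cases by how $r$ varies.

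\emph{(i) $r$ separates $\kappa$.} Suppose $r(a,\cdot)$ and $r(b,\cdot)$ differ on the chosen support, in the sense that some value of $R$ occurs only with $\kappa=a$, or only with $\kappa=b$. Then on that event $\kappa_i$ is $W_i$-measurable, so $M(R)=P_{\kappa}P_\kappa^\top$ is rank one and the first failure occurs.

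\emph{(ii) $r$ is informative about outcomes only.} Otherwise, $R$ varies with $(D_{ia},D_{ib})$. The level set $\{R=r_0\}$ then selects a subset of the finite noise support, and on that subset the conditional mean of $\epsilon_{ia}$ or $\epsilon_{ib}$ is nonzero. I would choose the atom locations generically, perturbing them if necessary, so that the two vectors $\Pr(\kappa=a\mid R)\,\mathbb E[\epsilon_{ia}\mid R,\kappa=a]\,P_{\bar p_a}$ and $\Pr(\kappa=b\mid R)\,\mathbb E[\epsilon_{ib}\mid R,\kappa=b]\,P_{\bar p_b}$ do not cancel. Since $\bar p_a\neq\bar p_b$, the vectors $P_{\bar p_a}$ and $P_{\bar p_b}$ are linearly independent, so cancellation would require both coefficients to vanish. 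That in turn would mean $R$ is uninformative about $\epsilon_{ia}$ and $\epsilon_{ib}$ on the support, which contradicts the case assumption. Hence $b(R)\neq 0$ with positive probability, and the second failure occurs.

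I expect the main obstacle to be this combinatorial step: turning bare measurability and nonconstancy of $r$ into a finite noise support on which either $\kappa$ is revealed or some conditional noise mean is nonzero. The delicate situations are mixed cases in which $r$ depends on both $\kappa$ and the outcomes, and where the two sources of information could cancel. I would handle these by using the freedom in the atom locations and probabilities, choosing them generically with respect to the finitely many values of $r$ involved, so that the polynomial identities required for exact cancellation fail.
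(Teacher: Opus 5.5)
\textbf{There is a genuine gap in case (ii).} Your overall architecture matches the paper's: the degenerate DGP at $(\bar z,\bar p_{1:K})$ with constant $\Theta$, the conditional normal equations, and the dichotomy between singular $M(R)$ and nonzero $b(R)$. The problem is that you split cases on the particular support you built rather than on the function $r$, and the implication you assert in case (ii) is false.

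Take $r(a,d_a,d_b)=d_b$ and $r(b,d_a,d_b)=d_a$, i.e.\ $R_i=D_{i\kappa_i'}$, which reveals the non-queried outcome (essentially the DCUOML design). It is nonconstant, and $(a,0,0)$, $(a,1,1)$ are legitimate witnesses. Your two-point laws then give $D_a$ and $D_b$ both supported on $\{0,1\}$. Every value of $R$ occurs under both $\kappa=a$ and $\kappa=b$, so you are in case (ii). But $\mathbb E[\epsilon_a\mid R=r_0,\kappa=a]=\mathbb E[\epsilon_a\mid D_b=r_0]=0$ by independence of the shocks, and symmetrically for $b$, so $b(R)\equiv 0$. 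Meanwhile $M(R)\succ 0$, because both conditional query probabilities are positive. The unique minimizer is therefore $\Theta^\circ=\mathbb E[\Theta\mid W]$, and identification \emph{holds} in your DGP.

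Your step from ``$R$ varies with $(D_a,D_b)$'' to a nonzero coefficient conflates two things. One is informativeness about the outcomes; indeed $\mathbb E[\epsilon_a\mid R]\neq 0$ here. The other is informativeness about the \emph{queried} shock given $\kappa$, which is what actually enters $b(R)$. Genericity does not rescue this. The vanishing is structural (independence of $\epsilon_\kappa$ and $D_{\kappa'}$), not a polynomial coincidence. You also cannot perturb the atom locations: $r$ is an arbitrary measurable function, so moving the atoms off your witness points forfeits all control of its values. Only the means $P_a^\top\Theta^\circ$ and $P_b^\top\Theta^\circ$, and hence the atom weights, are free.

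\textbf{The missing idea is to split on $r$ itself, as the paper does.}

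First case: $r(a,d_a,d_b)\neq r(b,d_a,d_b)$ at some single outcome pair. Take zero noise and $\theta_0$ solving $P_a^\top\theta_0=d_a$, $P_b^\top\theta_0=d_b$. Then $R$ is deterministic given $\kappa$ and reveals it, which gives your first failure. This is where $R=D_{\kappa'}$ belongs, using any $d_a\neq d_b$.

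Second case: $r(a,\cdot)=r(b,\cdot)=F$ with $F$ nonconstant. Then $F$ varies in one coordinate with the other held fixed, say $F(u_0,c)\neq F(u_1,c)$. The dependence is now on the outcome that is queried when $\kappa=a$. Use a mean-zero shock law with atoms at $u_0-\mu$ and $u_1-\mu$; the paper uses a three-point law with mass $O(\rho)$ off zero, so that cross terms are $O(\rho^2)$. This produces a $P_a$-component of $\mathbb E[P_\kappa\epsilon_\kappa\mathbf 1\{R=w_0\}]$ of order $\rho$, which no multiple of $P_b$ can cancel. In this second case a version of your finite-support bias argument also works, for example with $\epsilon_b\equiv 0$ or with the mean of $D_b$ tuned. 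It works only after refinements that depend on $\kappa$ have been routed to the first case.
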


Two failure modes drive this result: if the learner can infer the query price, the loss only constrains $P_{i\kappa_i}^\top\Theta_i$ and the orthogonal direction is unidentified; if a masked outcome is revealed, it induces dependence between the query noise and the inputs, leading to bias.


\section{Experiments}\label{sec:experiments}

\subsection{Evaluating Design Alternatives}\label{sec:eval_design_alts}
We evaluate \textsc{DCMOML} and design alternatives on a more realistic variant of the illustrative example considered in Appendix~\ref{ex:fail}. Each task $i$ is a store-specific linear demand curve with task-varying
slope and intercept. Prices are chosen endogenously by a manager who (i) forms a noisy estimate of the task's revenue-optimal price and (ii) experiments locally around that estimate. We vary the noise in this estimate to control confounding: accurate estimates tightly couple prices to latent parameters, while higher noise introduces quasi-exogenous price variation and weakens confounding. We consider high, medium, and low confounding regimes (HC, MC, LC, resp.), always with small within-task experimentation. The full setup appears in Appendix~\ref{app:eval_design_alts_setup}. Since prices are drawn from a continuous distribution and depend only the latent task parameters, Assumption~\ref{ass:local_exogeneity} is satisfied for any pair of indices; thus the query set can be any pair of indices.

{\bf Methods.}
The alternatives isolate key design choices in DCMOML.

\begin{enumerate}[leftmargin=*]

  \item {META.}
  A classical meta-learning baseline that conditions on $(p_1,D_1),\ldots,(p_{K-1},D_{K-1})$ and
  minimizes squared error on the final point $(p_K,D_K)$.

  \item {DCML (Decision-conditioned meta-learning).}
  A decision-conditioned meta-learner that also conditions on the query price $p_K$ (but not $D_K$) and predicts $D_K$. This baseline tests whether simply passing $p_K$ to META resolves the confounding problem.

  \item {DCUOML (Decision-conditioned unassigned-outcome meta-learning).}
  A DCMOML-style variant that randomizes the query index over $\{K-1,K\}$ but also reveals the non-query demand $D_{-q}$ to the meta-learner (unassigned to any price index).

  \item {EB-GLS (Empirical Bayes with GLS shrinkage).}
  A hierarchical random-effects baseline that models each task's linear demand parameters
  $(\theta_i^1,\theta_i^0)$ as draws from a shared Gaussian prior estimated from the data, and returns their posterior mean under a heteroskedastic GLS likelihood.

  \item {SHARED (Pooled OLS).}
  Fits a single linear demand model by pooling all tasks and observations, yielding one shared intercept and slope.

  \item {TASK-OLS (Per-task OLS).}
  Fits an independent OLS demand model for each task using only that task's observations.
\end{enumerate}

We focus on $K=2$, the most stringent regime for outcome obfuscation, since DCMOML reveals no demand values to the learner. All transfer-learning methods use the same feedforward neural network architecture (MLP with hidden dimension 128 and depth 4). Figure~\ref{fig:mse_k2} reports MSE with standard errors for recovering the task-level slope $\theta_i^1$ and intercept $\theta_i^0$.

\begin{figure}[t]
  \centering
  \begin{subfigure}[t]{0.5\linewidth}
    \centering
    \includegraphics[width=\linewidth]{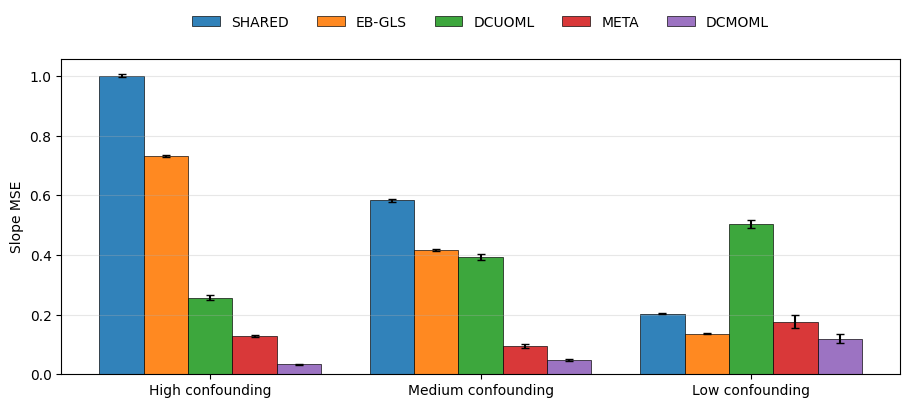}
    \caption{Slope MSE}
  \end{subfigure}\hfill
  \begin{subfigure}[t]{0.5\linewidth}
    \centering
    \includegraphics[width=\linewidth]{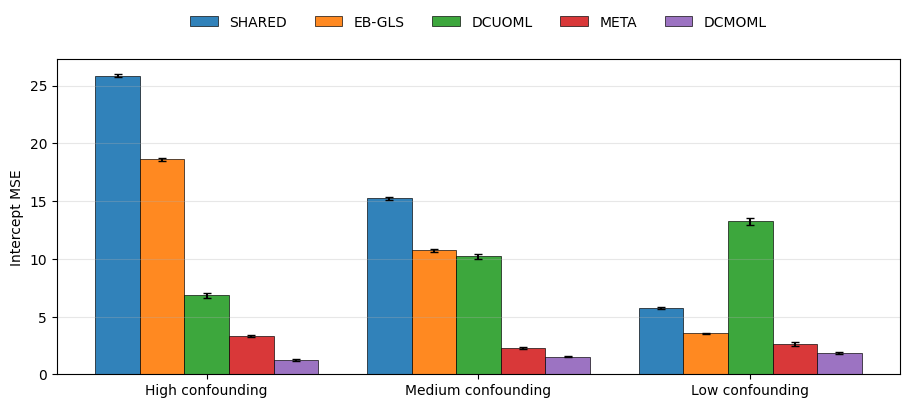}
    \caption{Intercept MSE}
  \end{subfigure}
  \caption{Estimation error across confounding levels ($K=2$). Error bars denote $\pm 1$ SE.}
  \label{fig:mse_k2}
\end{figure}

{\bf Results.} We make three main observations. (1) Under high confounding, \textsc{DCMOML} substantially outperforms all baselines in recovering both slope and intercept. In contrast, \textsc{DCML} performs poorly because conditioning on the query decision destroys identification. \textsc{TASK-OLS} is unstable due to limited within-task variation. \textsc{EB-GLS} improves over per-task estimation via shrinkage but remains below meta-learning approaches, underscoring the value of decision histories. (2) As confounding weakens, performance gaps narrow: outcome masking discards useful signal, and \textsc{DCMOML}, \textsc{META}, and \textsc{EB-GLS} perform similarly. (3) \textsc{DCUOML} performs poorly despite query randomization: revealing even one demand outcome lets the learner infer the query decision, reintroducing identification failures. This supports the maximal validity result of Theorem~\ref{prop:local_maximal_validity}.

\subsection{Evaluation on Retail Dataset}\label{sec:retail_eval}
We evaluate {DCMOML} on \emph{UK-online-retail}, an online retail transaction dataset from a UK-based gift retailer spanning 01/12/2010--09/12/2011 \citep{uci_online_retail}. It contains $\sim 4{,}070$ products, with an average of 3.78 distinct posted prices per product (median 4). Price exposure is highly concentrated: the modal price accounts for 65.48\% of observed days on average, and the top two prices for 89.74\%.

{\bf Tasks and holdout protocol.}
We define two product-level tasks. In both, the context is $Z_i\in\mathbb{R}^{1024}$, the sentence-transformer embedding \citep{reimers2019sentencebert} of the product title. Since ground-truth demand parameters $\Theta_i$ are unobserved, we evaluate demand estimation by held-out price-point prediction.
\begin{enumerate}[leftmargin=*]
\item {\it Static-Top3 (aggregate view).}
We retain products with at least three distinct prices ($n=2833$). For each product, we compute average daily demand at its three most frequently observed prices, producing three pairs
$(p_{i1},D_{i1}), (p_{i2},D_{i2}), (p_{i3},D_{i3})$ ordered by frequency. We train on the top two pairs ($K=2$) and evaluate on the third. This intentionally discards timing and treats the data as a compressed price--mean-demand summary, matching common demand-estimation practice.
\item {\it Exposure-Sequence (temporal view).}
For each product, we compress the daily price/sales series into a sequence of price exposures
$\{(p_{ik},e_{ik},D_{ik})\}_{k=1}^{K_i}$, where $p_{ik}$ is the $k$th distinct posted price, $e_{ik}$ is the number of consecutive days it remains in effect, and $D_{ik}$ is average daily demand over those days. By construction $p_{i,k}\neq p_{i,k+1}$. We truncate to common length $K$ and focus on $K=2$, holding out exposure $3$.
\end{enumerate}

{\bf Practical considerations.}
Both tasks induce heteroskedasticity because $D_{ik}$ averages over variable exposure lengths; we therefore minimize exposure-weighted squared losses. In the temporal task, exposure lengths $e_{ik}$ are themselves decisions and may encode latent demand conditions. DCMOML applies the same information-design principle: condition on endogenous decision histories (prices and exposures), but obfuscate outcomes so the learner cannot deterministically infer which decision is supervised. We focus on $K=2$ because (i) it is the hardest regime for DCMOML, revealing no demand outcomes to the meta-learner, and (ii) in the temporal task it emphasizes recent history, mitigating non-stationarity concerns.

{\bf Methods.}
We compare:
\begin{enumerate}[leftmargin=*]
\item[(i)] {DCMOML.} Our proposed meta-learner that uses inputs $(Z_i,p_{i1},p_{i2})$ for Static-Top3 and $(Z_i,p_{i1},e_{i1},p_{i2},e_{i2})$ for Exposure-Sequence. Training averages the exposure-weighted squared loss over query indices $\{1,2\}$.
\item[(ii)] {META.} A classical meta-learning baseline that conditions on fully observed support pairs and predicts a fixed query point. For fair comparison with DCMOML, we symmetrize META by averaging the exposure-weighted squred loss over both support/query assignments.
\item[(iii)] {META-NA (non-averaged).} A temporal META variant for Exposure-Sequence that uses exposure $1$ as support and exposure $2$ as query, without averaging.
\item[(iv)] {SHARED.} A pooled baseline that predicts $\Theta_i$ from $Z_i$ alone, trained on the same query indices as DCMOML/META.
\item[(v)] {PER-TASK.} A Static-Top3 product-specific baseline that fits a separate exposure-weighted linear model per product using all available non-holdout price points, not just the top three.
\end{enumerate}

{\bf Training and evaluation.}
For transfer-learning methods, we split products into 80\% train and 20\% validation sets and use validation RMSE for early stopping, with the same criterion across methods. The test set contains the holdout third data-point particular to the method from all products. All methods use the same feedforward network. Losses are exposure-weighted and normalized within each product in every batch (Appendix~\ref{app:retail_impl}). We repeat the full pipeline over 100 random seeds; PER-TASK solves weighted least squares per product exactly and requires no repeats. We report exposure-weighted held-out RMSE,
$\text{RMSE}=\sqrt{(\sum_i e_i \,(y_i-\hat y_i)^2)/(\sum_i e_i)},$
with 95\% confidence intervals across seeds.

\begin{figure}[t]
  \centering
  \begin{subfigure}[t]{0.4\linewidth}
    \centering
    \includegraphics[width=\linewidth]{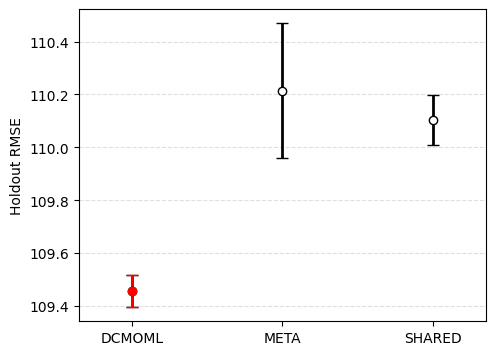}
    \caption{Static-Top3 task.}
    \label{fig:static_top3_rmse}
  \end{subfigure}\hfill
  \begin{subfigure}[t]{0.4\linewidth}
    \centering
    \includegraphics[width=\linewidth]{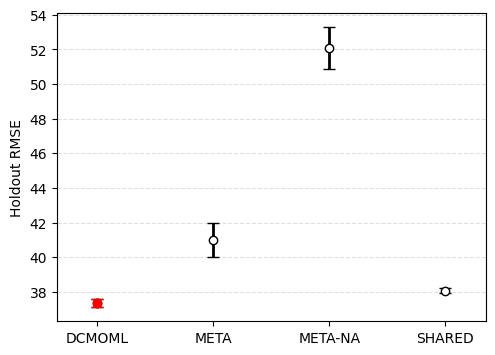}
    \caption{Exposure-Sequence task.}
    \label{fig:exposure_sequence_rmse}
  \end{subfigure}
  \caption{\small{Held-out RMSE with 95\% confidence intervals. DCMOML is highlighted in red.}}
  \label{fig:uk_retail_rmse}
\end{figure}

{\bf Findings.}
Figure~\ref{fig:uk_retail_rmse} shows that, across both task definitions, {DCMOML} achieves the lowest held-out RMSE, outperforming outcome-conditioned {META} and pooled {SHARED}. This is consistent with pricing endogeneity: posted prices appear correlated with latent product-specific demand factors not captured by $Z_i$. RMSE is generally lower in Exposure-Sequence than Static-Top3, consistent with two differences: (a) Static-Top3 tests on a holdout price unseen in training, while Exposure-Sequence holdouts may appear among training price points; and (b) recent temporal history provides more relevant signal. SHARED is competitive with META and outperforms it on Exposure-Sequence, possibly reflecting both endogeneity and the fact that META must learn a more complex model under the same supervisory budget as SHARED. META-NA has effectively half this budget and performs substantially worse. Finally, {PER-TASK} performs poorly on Static-Top3 (RMSE 200.50; omitted for readability), highlighting the gains from cross-task transfer despite product heterogeneity.

\section{Conclusion}

We study multi-task demand learning under endogenous pricing, where prices are chosen based on latent task-specific demand fundamentals. In this setting, standard pooling and meta-learning approaches generally fail to recover causal price effects. Under a local exogeneity assumption, we propose a simple information-design principle to identify the conditional mean estimand: the learner should condition on the endogenous decision history, while preventing it from deterministically identifying which decision generates the supervision signal. This leads to Decision-Conditioned Masked-Outcome Meta-Learning (DCMOML), which conditions on the full set of task prices, masks outcomes at two candidate query indices, and randomizes the query selection. We validate the method on synthetic data with controlled confounding and on a real e-commerce dataset.

Future work includes extending DCMOML to richer demand models (e.g., nonlinear and cross-price effects), integrating identification with downstream pricing under distribution shift, and developing diagnostics for when outcome masking is necessary in practice.


\bibliographystyle{plainnat}
\bibliography{refs}

\appendix

\section{Illustrative example}
\label{ex:fail}
\paragraph{Shared-Model Estimation Can Fail under Near-Optimal Local Pricing behavior.}

We illustrate the failure of shared-model learning with a simple example in which learning only the shared component leads to a qualitatively incorrect conclusion. Suppose all stores share the same observable context, $Z_i \equiv Z$ for all  $i$, so that $g(Z_i)$ is constant across tasks. In this case, the estimator in
\eqref{eq:global-naive} reduces to fitting a \emph{single} common parameter vector $\Theta$ using all
pooled observations. Assume that each store $i$ has a true linear demand curve
\[
D_{ik} = \theta_i^0 + \theta_1 p_{ik} + \epsilon_{ik},
\qquad
\theta_1 < 0,
\]
where $\theta_1$ is a common (negative) slope and $\theta_i^0 \ge 0$ is a store-specific intercept.
Suppose further that the store manager observes her local demand curve perfectly and sets prices close to the
revenue-maximizing level
\[
p_i^\ast
=
\arg\max_p \; p(\theta_i^0 + \theta_1 p)
=
-\frac{\theta_i^0}{2\theta_1},
\]
with small idiosyncratic pricing noise: $p_{ik} = p_i^\ast + \varepsilon_{ik}.$
Substituting into the demand equation yields
\begin{align*}
D_{ik} = \theta_i^0 + \theta_1 p_{ik} + \epsilon_{ik} = \frac{\theta_i^0}{2} + \theta_1 \varepsilon_{ik} + \epsilon_{ik} = -\theta_1 p_{ik} + 2\theta_1 \varepsilon_{ik} + \epsilon_{ik}.
\end{align*}
Thus, in the pooled data, demand is positively correlated with price even though the true causal
slope $\theta_1$ is strictly negative. Because stores with higher intercepts optimally charge higher
prices and also exhibit higher demand, the shared-model estimator attributes this cross-store
variation to a positive price effect. As a result, the estimator in \eqref{eq:global-naive} converges to a parameter vector with a
positive price coefficient, reversing the sign of the true causal effect. Figure~\ref{fig:confounding-example}
illustrates this phenomenon: although all stores have downward-sloping demand curves, the pooled
price--demand relationship is upward sloping due to endogenous pricing.

\begin{figure}[ht]
    \centering
    
    \begin{subfigure}[t]{0.45\linewidth}
        \centering
        \includegraphics[width=\linewidth]{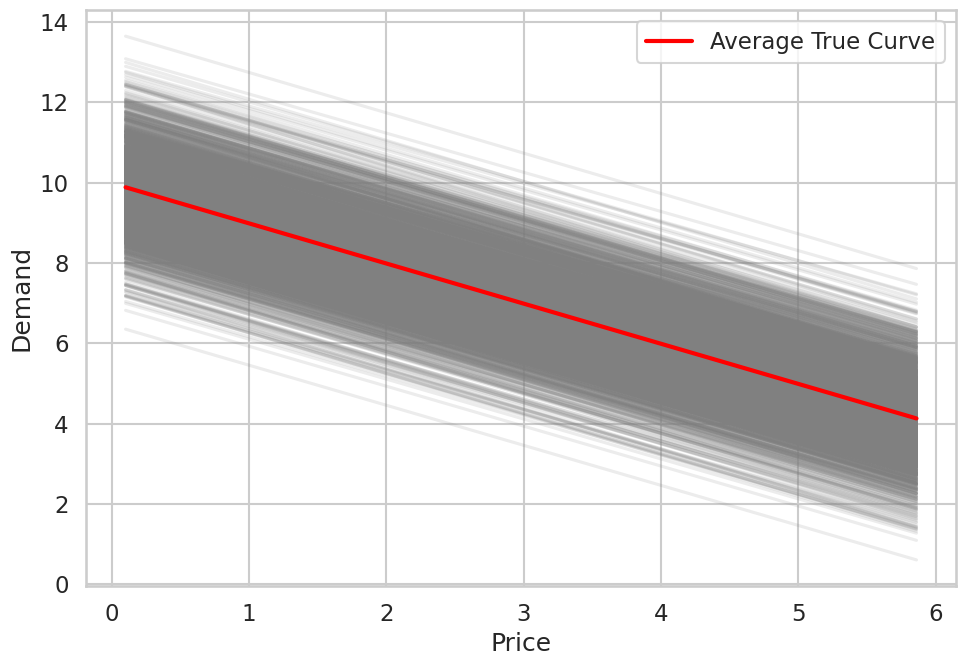} 
        \caption{True demand curves across stores.  
        Each store has a linear mean demand curve 
        $D_{ik} = \theta_i^0 + \theta_1 p_{ik}$ 
        with heterogeneous intercepts $\theta_i^0 \sim \mathcal{N}(10, 1)$  
        and identical negative slope 
        $\theta_1 = -1$.}
    \end{subfigure}
    \hfill
    \begin{subfigure}[t]{0.45\linewidth}
        \centering
        \includegraphics[width=\linewidth]{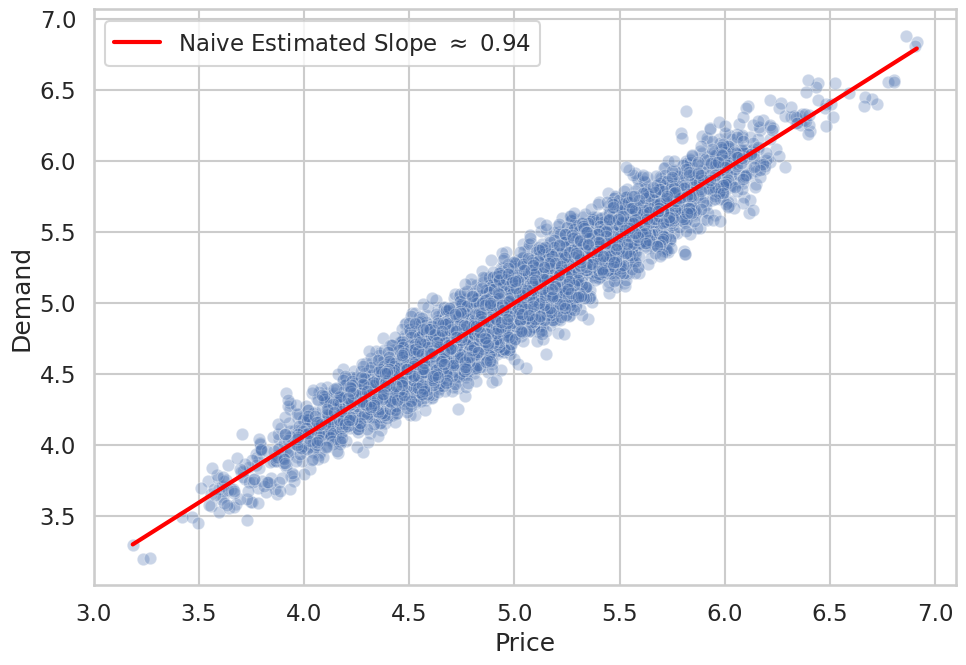} 
        \caption{Pooled price--demand data.  
        Managers set prices near the optimum  
        $p_i^\ast = -\theta_i^0/(2\theta_1)$. Demand noise is $\epsilon_{ik} \sim \mathcal{N}(0, 1)$, and pricing noise is $\varepsilon_{ik} \sim \mathcal{N}(0, 0.25^2)$. }
    \end{subfigure}

    \caption{Illustration of confounding in the multi-task pricing setting ($N=2000$, $K=2$).  
    Left: True (causal) demand curves all have negative slopes.  
    Right: Pooled data exhibit a positive price--demand relationship due to optimal pricing 
    responding to store-specific intercepts.}
    \label{fig:confounding-example}
\end{figure}

\paragraph{Failure of Meta-Learning.}
\label{ex:meta_failure}

We continue with the same example as above with $K=2$ observations per store. The
support set consists of a single observation $(p_{i1}, D_{i1})$, and the query observation is
$(p_{i2}, D_{i2})$. Suppose the intercept $\theta_i^0$ is Gaussian, and both the demand noise $\epsilon_{ik}$ and
the pricing noise $\varepsilon_{ik}$ are Gaussian. Under the manager’s pricing rule, the
random vector $\bigl(\Theta_i, p_{i1}, D_{i1}, p_{i2}, D_{i2}\bigr)$ is jointly Gaussian. Consequently, the conditional expectation $\mathbb{E}[\Theta_i \mid p_{i1}, D_{i1}]$
is an affine function of $(p_{i1}, D_{i1})$, and hence, linear adaptation rules of the form 
$\widehat{\Theta}_i = A[
p_{i1}, D_{i1}]^\top+ B$
are sufficient to recover the mapping. Nevertheless, the ERM objective \eqref{eq:meta-erm} remains biased since the
second-period price $p_{i2}$ is endogenously chosen and remains
correlated with $\Theta_i$ even after conditioning on $(p_{i1}, D_{i1})$. Thus, meta-learning fails to recover the causal price-response function. Figure~\ref{fig:pd_theta_distributions}
illustrates this failure.

\begin{figure}[ht]
    \centering
    \begin{subfigure}[t]{0.45\textwidth}
        \centering
        \includegraphics[width=\textwidth]{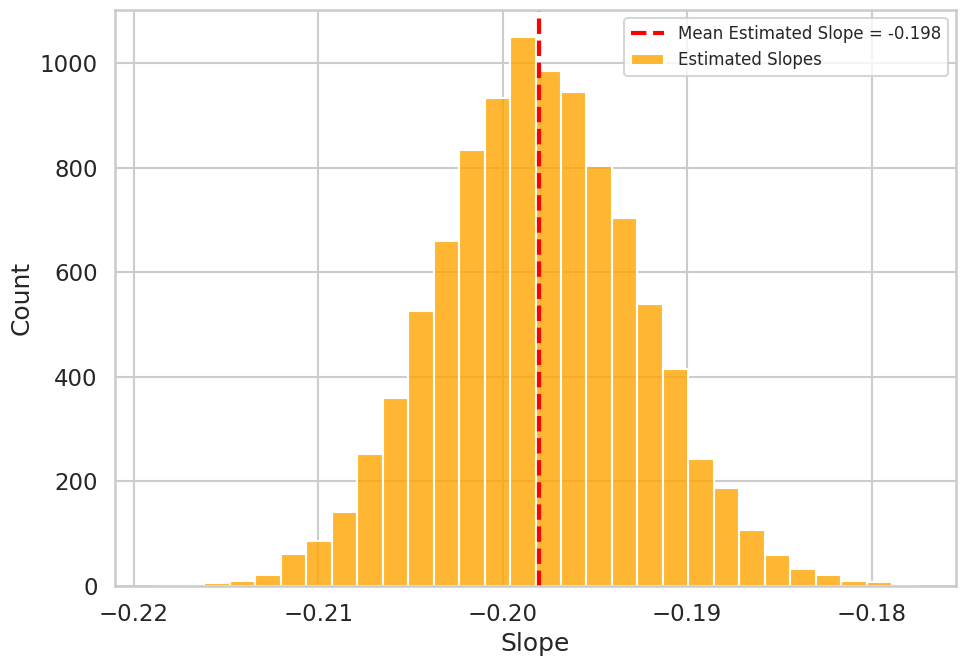}
        \caption{
        Distribution of estimated slopes. The true slope is $-1$ for all tasks.
        }
        \label{fig:pd_slope_dist}
    \end{subfigure}
    \hfill
    \begin{subfigure}[t]{0.45\textwidth}
        \centering
        \includegraphics[width=\textwidth]{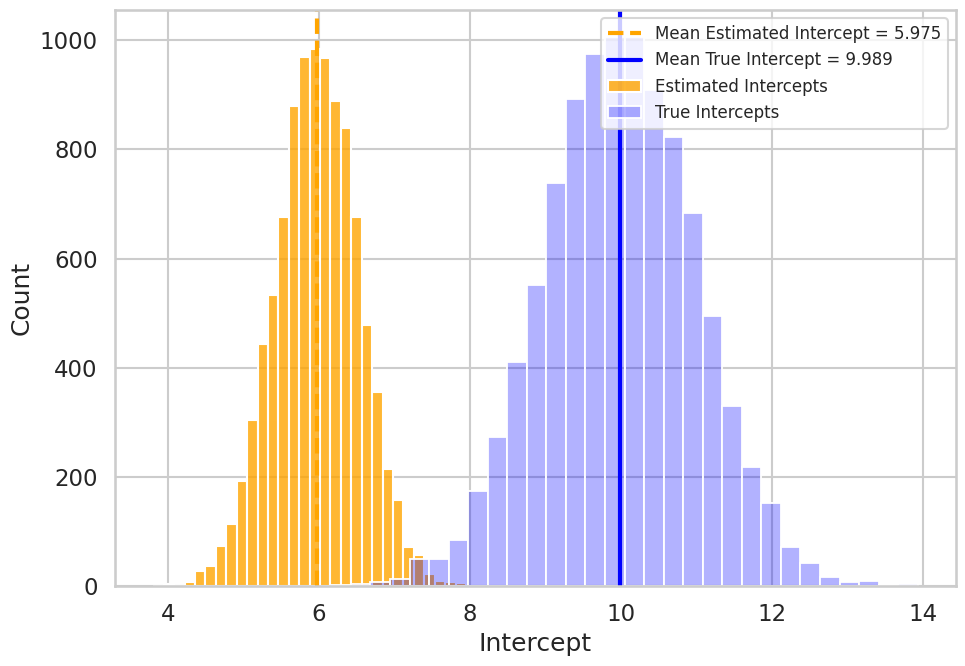}
        \caption{
        Distribution of true and estimated intercepts.
        }
        \label{fig:our_intercept_dist}
    \end{subfigure}

    \caption{
    Estimation performance of the simple outcome-based meta-estimator
    $\hat{\theta}_{ij} = a_j p_{i1} + b_j D_{i1} + c_j$ for $j\in\{0,1\}$ with $K=2$.
    The left panel shows slope estimates while the right panel shows intercept estimates.    }
    \label{fig:pd_theta_distributions}
\end{figure}

\paragraph{Performance of DCMOML.}
We now examine the performance of DCMOML in this example, when each store provides only $K=2$ observations. Since the observable context is constant (so $Z_i\equiv Z$), the two-outcome withholding design in
\eqref{eq:info_drop2_twopoint} leaves the learner with only the realized prices: $X_i^{-(K_i^\ast,K)}=(p_{i1},p_{i2}).$
 (Here $K_i^\ast=1$ and $K=2$ for every task, so the withheld indices are $\{1,2\}$.) Thus, DCMOML
amounts to training a meta-learner that maps the \emph{pair of prices} to the store-level demand
parameters,
\[
g^\ast(p_{i1},p_{i2})=\mathbb{E}[\Theta_i\mid p_{i1},p_{i2}],
\qquad \Theta_i=(\theta_i^0,\theta_i^1)^\top.
\]
As discussed earlier, in this example, all primitives are Gaussian, and the pricing rule is affine, so
$(\Theta_i,p_{i1},p_{i2})$ is jointly Gaussian. Consequently,
$g^\ast(p_{i1},p_{i2})$ is an affine function of $(p_{i1},p_{i2})$, and it is sufficient to fit a
linear predictor. Moreover, $(p_{i1},p_{i2})$ is exchangeable, so the target is symmetric:
$g^\ast(p_{i1},p_{i2})=g^\ast(p_{i2},p_{i1})$. One can therefore impose a symmetric linear form in
which the two prices enter with the same coefficients. Figure~\ref{fig:our_theta_distributions} illustrates that DCMOML with this linear symmetric function class
recovers the correct demand parameters in this setting.

\begin{figure}[ht]
    \centering
    \begin{subfigure}[t]{0.45\textwidth}
        \centering
        \includegraphics[width=\textwidth]{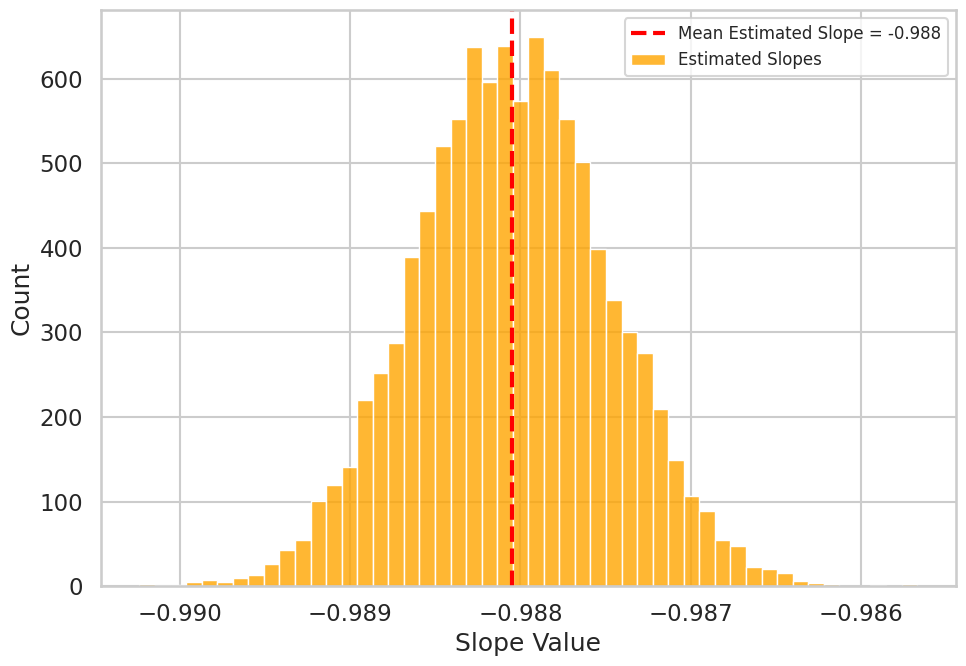}
        \caption{
        Distribution of estimated slopes under DCMOML. The true slope is $-1$ for all tasks.
        }
        \label{fig:our_slope_dist}
    \end{subfigure}
    \hfill
    \begin{subfigure}[t]{0.45\textwidth}
        \centering
        \includegraphics[width=\textwidth]{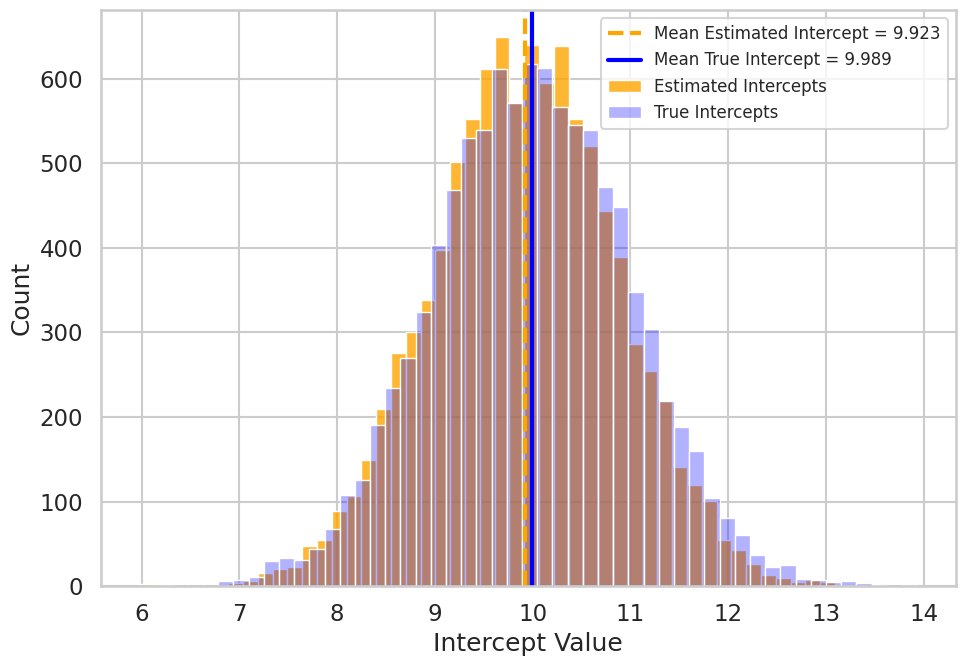}
        \caption{
        Distribution of true and estimated intercepts under DCMOML.        }
        \label{fig:pd_intercept_dist}
    \end{subfigure}

    \caption{
    Estimation performance of the DCMOML meta-learner
    $\hat{\theta}_{ij} = a_j (p_{i1} + p_{i2}) + c_j$ for $j\in\{0,1\}$ with $K=2$. The estimator well approximates the demand parameters across all tasks.
    }
    \label{fig:our_theta_distributions}
\end{figure}

\section{Proofs of Main Results}\label{sec:app}

\begin{proof}[Proof of Theorem~\ref{thm:dcmoml_identification_consistency}] We first prove the identifiability result.

{\bf Identifiability.} Fix a task $i$ and suppress the subscript $i$. Write
\[
Z \equiv Z_i,\qquad
p_{1:K}\equiv (p_{i1},\ldots,p_{iK}),\qquad
\Theta\equiv \Theta_i,\qquad
A\equiv A_i=\{a,b\},\qquad
\kappa\equiv \kappa_i.
\]
Let
\[
X \equiv X_i^{-A_i}=(Z,p_{1:K},D^{-A}),
\qquad
D^{-A}=\{D_j:j\notin A\}.
\]
The demand equation is
\begin{equation}
D_k=P_k^\top \Theta+\epsilon_k,
\qquad
P_k=(1,p_k)^\top .
\label{eq:app_demand}
\end{equation}

We first record two consequences of the construction. Since $A$ is selected as a measurable function of $(Z,p_{1:K})$, and since $(Z,p_{1:K})$ is included in $X$, the unordered set $A$ is $X$-measurable. Moreover, conditional on $X$, the query index $\kappa$ is uniform on $A$ and carries no additional information about $\Theta$:
\begin{equation}
\mathbb{E}[\Theta\mid X,\kappa]
=
\mathbb{E}[\Theta\mid X]
\triangleq g^\ast(X).
\label{eq:app_kappa_ancillary}
\end{equation}
Indeed, $\kappa$ is drawn independently of all structural variables after $A$ has been determined, and $X$ depends on the two withheld indices only through the unordered set $A$, not through which element is selected as the query.

By the local exogeneity Assumption~\ref{ass:local_exogeneity}, the query shock also satisfies
\begin{equation}
\mathbb{E}[\epsilon_\kappa\mid X,\kappa]=0.
\label{eq:app_eps_zero}
\end{equation}

Now fix any measurable $g$ with finite risk and define
\[
\Delta(X)=g(X)-g^\ast(X).
\]
Using \eqref{eq:app_demand},
\[
D_\kappa-P_\kappa^\top g(X)
=
\underbrace{P_\kappa^\top(\Theta-g^\ast(X))+\epsilon_\kappa}_{\xi}
-
P_\kappa^\top \Delta(X).
\]
Therefore,
\begin{align}
\mathcal{L}(g)-\mathcal{L}(g^\ast)
&=
\mathbb{E}\!\left[(P_\kappa^\top \Delta(X))^2\right]
-
2\mathbb{E}\!\left[\xi\,P_\kappa^\top \Delta(X)\right].
\label{eq:app_excess_pre}
\end{align}
The cross term is zero. Since $\Delta(X)$ is $X$-measurable,
\[
\mathbb{E}\!\left[\xi\,P_\kappa^\top \Delta(X)\right]
=
\mathbb{E}\!\left[
\Delta(X)^\top
\mathbb{E}[P_\kappa \xi\mid X]
\right].
\]
Furthermore,
\begin{align*}
\mathbb{E}[P_\kappa \xi\mid X]
&=
\mathbb{E}\!\left[
P_\kappa P_\kappa^\top(\Theta-g^\ast(X))
\mid X
\right]
+
\mathbb{E}[P_\kappa \epsilon_\kappa\mid X].
\end{align*}
The second term is zero by \eqref{eq:app_eps_zero}. For the first term, conditioning additionally on $\kappa$ gives
\[
\mathbb{E}\!\left[
P_\kappa P_\kappa^\top(\Theta-g^\ast(X))
\mid X,\kappa
\right]
=
P_\kappa P_\kappa^\top
\mathbb{E}[\Theta-g^\ast(X)\mid X,\kappa]
=0,
\]
where the last equality uses \eqref{eq:app_kappa_ancillary}. Hence
\[
\mathbb{E}[P_\kappa \xi\mid X]=0,
\]
and the cross term in \eqref{eq:app_excess_pre} vanishes. Thus
\begin{equation}
\mathcal{L}(g)-\mathcal{L}(g^\ast)
=
\mathbb{E}\!\left[(P_\kappa^\top \Delta(X))^2\right]
=
\mathbb{E}\!\left[
\Delta(X)^\top Q(X)\Delta(X)
\right],
\label{eq:app_excess_identity}
\end{equation}
where
\[
Q(X)
\triangleq
\mathbb{E}[P_\kappa P_\kappa^\top\mid X].
\]

It remains to show that $Q(X)$ is positive definite almost surely. Conditional on $X$, $\kappa$ is uniform on $A=\{a,b\}$, so
\[
Q(X)=\frac12 P_aP_a^\top+\frac12 P_bP_b^\top .
\]
Since $p_a\neq p_b$ almost surely, the vectors $P_a=(1,p_a)^\top$ and $P_b=(1,p_b)^\top$ span $\mathbb{R}^2$. Hence $Q(X)\succ 0$ almost surely.

Equation \eqref{eq:app_excess_identity} then implies that $\mathcal{L}(g)=\mathcal{L}(g^\ast)$ only if $\Delta(X)=0$ almost surely. Therefore $g^\ast(X)=\mathbb{E}[\Theta\mid X]$ is the unique minimizer, and the conditional causal target $\mathbb{E}[\Theta_i\mid X_i^{-A_i}]$ is identified.\\

{\bf Consistency.} We next prove the consistency result under realizability. Define the empirical risk
\[
\widehat{\mathcal{L}}_N(\Lambda)
=
\frac1N\sum_{i=1}^N
\left(
D_{i\kappa_i}
-
P_{i\kappa_i}^\top g_\Lambda(X_i^{-A_i})
\right)^2,
\]
and the restricted population risk
\[
\mathcal{L}(\Lambda)=\mathcal{L}(g_\Lambda).
\]
By realizability, there exists $\Lambda_0\in\mathcal{H}$ such that
\[
g_{\Lambda_0}(X_i^{-A_i})=g^\ast(X_i^{-A_i})
\qquad\text{almost surely}.
\]
Therefore
\[
\mathcal{L}(\Lambda_0)=\mathcal{L}(g^\ast)
=
\inf_{\Lambda\in\mathcal{H}}\mathcal{L}(\Lambda).
\]

Let $\widehat{\Lambda}$ be any DCMOML ERM solution. By the Glivenko--Cantelli assumption,
\[
\sup_{\Lambda\in\mathcal{H}}
\left|
\widehat{\mathcal{L}}_N(\Lambda)-\mathcal{L}(\Lambda)
\right|
\xrightarrow{p}0.
\]
Fix $\varepsilon>0$ and define the event
\[
\mathcal{E}_N(\varepsilon)
=
\left\{
\sup_{\Lambda\in\mathcal{H}}
\left|
\widehat{\mathcal{L}}_N(\Lambda)-\mathcal{L}(\Lambda)
\right|
\le \varepsilon
\right\}.
\]
On $\mathcal{E}_N(\varepsilon)$,
\begin{align*}
\mathcal{L}(\widehat{\Lambda})
&\le
\widehat{\mathcal{L}}_N(\widehat{\Lambda})+\varepsilon \\
&\le
\widehat{\mathcal{L}}_N(\Lambda_0)+\varepsilon \\
&\le
\mathcal{L}(\Lambda_0)+2\varepsilon \\
&=
\mathcal{L}(g^\ast)+2\varepsilon .
\end{align*}
Since $\mathbb{P}(\mathcal{E}_N(\varepsilon))\to 1$, it follows that
\[
\mathcal{L}(g_{\widehat{\Lambda}_{\mathrm{DCMOML}}})
\xrightarrow{p}
\mathcal{L}(g^\ast).
\]

We next prove $L^2$ consistency under the eigenvalue condition. From the excess-risk identity \eqref{eq:app_excess_identity}, applied with
\[
X_i^{-A_i}=(Z_i,p_{i1:K},D_i^{-A_i}),
\]
we have
\begin{align*}
&\mathcal{L}(g_{\widehat{\Lambda}_{\mathrm{DCMOML}}})
-
\mathcal{L}(g^\ast) \\
&\qquad =
\mathbb{E}\!\left[
\Delta_{\widehat{\Lambda}}(X_i^{-A_i})^\top
Q(X_i^{-A_i})
\Delta_{\widehat{\Lambda}}(X_i^{-A_i})
\right],
\end{align*}
where
\[
\Delta_{\widehat{\Lambda}}(X_i^{-A_i})
=
g_{\widehat{\Lambda}_{\mathrm{DCMOML}}}(X_i^{-A_i})
-
g^\ast(X_i^{-A_i})
\]
and
\[
Q(X_i^{-A_i})
=
\mathbb{E}\!\left[
P_{i\kappa_i}P_{i\kappa_i}^\top
\mid X_i^{-A_i}
\right].
\]
If
\[
\lambda_{\min}(Q(X_i^{-A_i}))\ge \underline{\lambda}>0
\qquad\text{almost surely},
\]
then
\[
\mathcal{L}(g_{\widehat{\Lambda}_{\mathrm{DCMOML}}})
-
\mathcal{L}(g^\ast)
\ge
\underline{\lambda}\,
\mathbb{E}\!\left[
\left\|
g_{\widehat{\Lambda}_{\mathrm{DCMOML}}}(X_i^{-A_i})
-
g^\ast(X_i^{-A_i})
\right\|_2^2
\right].
\]
Combining this bound with risk consistency yields
\[
\mathbb{E}\!\left[
\left\|
g_{\widehat{\Lambda}_{\mathrm{DCMOML}}}(X_i^{-A_i})
-
g^\ast(X_i^{-A_i})
\right\|_2^2
\right]
\xrightarrow{p}0.
\]
\end{proof}

\begin{proof}[Proof of Theorem~\ref{prop:local_maximal_validity}]
Suppress the subscript \(i\). By assumption, there exists a support point $(\bar z,\bar p_{1:K})$ such that
\[
\mathcal A(\bar z,\bar p_{1:K})=\{a,b\},
\qquad
\bar p_a\neq \bar p_b.
\]
We will construct a data-generating process with
\[
Z\equiv \bar z,
\qquad
p_{1:K}\equiv \bar p_{1:K}.
\]
for all tasks $i$. Write
\[
A=\{a,b\},
\qquad
P_j=(1,\bar p_j)^\top.
\]
Since \(\bar p_a\neq \bar p_b\), the vectors \(P_a\) and \(P_b\) are linearly
independent.

Let
\[
\mathbb P(\kappa=a)=\mathbb P(\kappa=b)=\frac12,
\]
with \(\kappa\) independent of all structural variables and all noise terms. Since
\(A=\{a,b\}\) deterministically under the constructed process, this is exactly
uniform assignment from the selected pair.

We consider two exhaustive cases.

\paragraph{Case 1: \(r\) distinguishes the queried index.}

Suppose there exist \(d_a,d_b\in\mathbb R\) such that
\[
r(a,d_a,d_b)\neq r(b,d_a,d_b).
\]
Choose \(\theta_0\in\mathbb R^2\) as the unique solution to
\[
P_a^\top\theta_0=d_a,
\qquad
P_b^\top\theta_0=d_b.
\]
This solution exists and is unique because \(P_a\) and \(P_b\) are linearly
independent.

Set
\[
\Theta\equiv \theta_0,
\qquad
\epsilon_j\equiv 0
\quad\text{for all }j.
\]
These noise terms are thus i.i.d., mean zero, and finite variance, aligned with our model. Moreover, since the noises are identically $0$, \ condition \eqref{ass:zero_mean} holds for all indices \(k\), in particular
for \(k\in\{a,b\}\). The demand model gives $D_j=P_j^\top\theta_0$ for every \(j\), so in particular
\[
D_a=d_a,
\qquad
D_b=d_b.
\]
Since \(\Theta\) is constant, we have that $\mathbb E[\Theta\mid W]=\theta_0$. 

Now because $r(a,d_a,d_b)\neq r(b,d_a,d_b)$, the value of $R=r(\kappa,D_a,D_b)$ reveals whether \(\kappa=a\) or \(\kappa=b\). Hence \(\kappa\) is
\(W\)-measurable.

Define
\[
v(W)
=
\begin{cases}
(\bar p_a,-1)^\top, & \kappa=a,\\
(\bar p_b,-1)^\top, & \kappa=b.
\end{cases}
\]
Then \(v(W)\) is \(W\)-measurable and
\[
P_\kappa^\top v(W)=0.
\]
For any bounded nonzero measurable scalar function \(t(W)\), define
\[
\widetilde g(W)=\theta_0+t(W)v(W).
\]
Then
\[
P_\kappa^\top \widetilde g(W)
=
P_\kappa^\top \theta_0.
\]
Since all shocks are zero,
\[
D_\kappa-P_\kappa^\top \widetilde g(W)
=
D_\kappa-P_\kappa^\top \theta_0
=
0.
\]
Thus \(\widetilde g\) is a population minimizer. But
\[
\widetilde g(W)\neq \theta_0=\mathbb E[\Theta\mid W]
\]
on any event where \(t(W)\neq0\). Therefore the refined risk has nonunique
population minimizers and does not identify \(\mathbb E[\Theta\mid W]\).

\paragraph{Case 2: \(r\) does not distinguish the queried index.}

Now suppose instead that
\[
r(a,d_a,d_b)=r(b,d_a,d_b)
\qquad
\text{for all }(d_a,d_b).
\]
Let the common function be
\[
F(d_a,d_b).
\]
Since \(r\) is nonconstant, \(F\) is nonconstant. Hence \(F\) varies in at least
one coordinate. Therefore either there exist \(u_0\neq u_1\) and \(c\) such that
\[
r(a,u_0,c)\neq r(a,u_1,c),
\]
or there exist \(u_0\neq u_1\) and \(c\) such that
\[
r(b,c,u_0)\neq r(b,c,u_1).
\]
The two cases are symmetric. We treat the first case.

Assume there exist \(u_0\neq u_1\) and \(c\in\mathbb R\) such that
\[
r(a,u_0,c)\neq r(a,u_1,c).
\]
Relabel \(u_0,u_1\), if necessary, so that $u_0<u_1$. Choose $\mu\in(u_0,u_1)$, and define
\[
x_0=u_0-\mu,
\qquad
x_1=u_1-\mu.
\]
Then we have that
\[
x_0<0<x_1.
\]
Let
\[
\lambda=\frac{x_1}{-x_0}>0.
\]
For sufficiently small \(\rho>0\), define a three-point distribution by
\[
\mathbb P(\epsilon=x_0)=\rho\lambda,
\qquad
\mathbb P(\epsilon=x_1)=\rho,
\qquad
\mathbb P(\epsilon=0)=1-\rho(1+\lambda).
\]
This distribution is valid for all sufficiently small \(\rho\). It has mean zero
because
\[
\mathbb E[\epsilon]
=
\rho\lambda x_0+\rho x_1
=
\rho
\left(
\frac{x_1}{-x_0}x_0+x_1
\right)
=
0,
\]
and it has finite variance because it has finite support.

Let $\epsilon_1,\dots,\epsilon_K$ be i.i.d. draws from this distribution, independent of \(\kappa\). Choose \(\theta_0\in\mathbb R^2\) as the unique solution to
\[
P_a^\top\theta_0=\mu,
\qquad
P_b^\top\theta_0=c.
\]
Set
\[
\Theta\equiv\theta_0,
\qquad
D_j=P_j^\top\theta_0+\epsilon_j.
\]
Then the demand model holds with deterministic prices and i.i.d. mean-zero
finite-variance shocks. Since \(\Theta\) is constant,
\[
\mathbb E[\Theta\mid W]=\theta_0.
\]

The local exogeneity condition also holds. Indeed, for every \(k\),
\[
D_j=P_j^\top\theta_0+\epsilon_j
\]
and the shocks are independent across \(j\). Therefore \(\epsilon_k\) is
independent of $(Z,p_{1:K},\mathbf D^{-k})$, and since \(\mathbb E[\epsilon_k]=0\),
\[
\mathbb E[\epsilon_k\mid Z,p_{1:K},\mathbf D^{-k}]=0.
\]
Thus condition \eqref{ass:zero_mean} holds for all indices \(k\), in particular
for \(k\in\{a,b\}\).

When
\[
\epsilon_a=x_\ell,
\qquad
\epsilon_b=0,
\qquad
\ell\in\{0,1\},
\]
we have
\[
D_a=\mu+x_\ell=u_\ell,
\qquad
D_b=c.
\]
Let $w_0=r(a,u_0,c).$ By construction, $w_0\neq r(a,u_1,c)$.

Define
\[
Y=P_\kappa\epsilon_\kappa.
\]
Because \(\Theta\), \(Z\), and \(p_{1:K}\) are deterministic, the masked
information
\[
X^{-A}=(Z,p_{1:K},D^{-A})
\]
is a function only of the shocks \(\{\epsilon_j:j\notin A\}\). Since the shocks
are independent across indices and \(\kappa\) is independent of all shocks,
\(X^{-A}\) is independent of
\[
(\kappa,\epsilon_a,\epsilon_b).
\]
Moreover, both
\[
R=r(\kappa,D_a,D_b)
\qquad
\text{and}
\qquad
Y=P_\kappa\epsilon_\kappa
\]
are measurable functions of \((\kappa,\epsilon_a,\epsilon_b)\). Hence
\[
X^{-A}\perp\!\!\!\perp (R,Y),
\]
and therefore
\[
\mathbb E[Y\mid X^{-A},R]
=
\mathbb E[Y\mid R].
\]

Consider the unnormalized moment
\[
M_\rho(w_0)
=
\mathbb E
\left[
P_\kappa\epsilon_\kappa \mathbf 1\{R=w_0\}
\right].
\]

The event
\[
E_0=\{\kappa=a,\epsilon_a=x_0,\epsilon_b=0\}
\]
has probability
\[
\frac12\cdot \rho\lambda\cdot \bigl(1-\rho(1+\lambda)\bigr).
\]
On \(E_0\),
\[
D_a=u_0,
\qquad
D_b=c,
\qquad
R=w_0,
\qquad
P_\kappa\epsilon_\kappa=P_a x_0.
\]
Thus the contribution of \(E_0\) to \(M_\rho(w_0)\) is
\[
\frac12\rho\lambda x_0P_a+O(\rho^2)
=
-\frac12\rho x_1P_a+O(\rho^2).
\]

The event
\[
\{\kappa=a,\epsilon_a=x_1,\epsilon_b=0\}
\]
does not contribute to \(\{R=w_0\}\), because
\[
r(a,u_1,c)\neq w_0.
\]
Events in which both \(\epsilon_a\neq0\) and \(\epsilon_b\neq0\) have probability
\(O(\rho^2)\), so their total contribution to \(M_\rho(w_0)\) is \(O(\rho^2)\).
Events with $\epsilon_\kappa=0$ make zero contribution. The remaining first-order events with possible nonzero contribution are those
with
\[
\kappa=b,
\qquad
\epsilon_b\neq0,
\qquad
\epsilon_a=0.
\]
Their contributions, when they satisfy \(R=w_0\), are multiples of \(P_b\).
Therefore there exists a scalar \(\alpha\) such that
\[
M_\rho(w_0)
=
-\frac12\rho x_1P_a
+
\rho\alpha P_b
+
O(\rho^2).
\]
The coefficient of \(P_a\) in the first-order term is nonzero because $x_1\neq0$.
Since \(P_a\) and \(P_b\) are linearly independent, no multiple of \(P_b\) can
cancel this nonzero \(P_a\)-component. Hence, for all sufficiently small
\(\rho>0\), $M_\rho(w_0)\neq0$. Moreover, $\mathbb P(R=w_0)>0$, because \(E_0\subseteq \{R=w_0\}\) and \(E_0\) has positive probability.
Therefore $\mathbb E[P_\kappa\epsilon_\kappa\mid R=w_0]\neq0$.
Using
\[
\mathbb E[Y\mid X^{-A},R]
=
\mathbb E[Y\mid R],
\]
we obtain $\mathbb E[P_\kappa\epsilon_\kappa\mid W]\neq0$ on the positive-probability event \(\{R=w_0\}\).

Now consider the conditional risk
\[
Q_W(q)
=
\mathbb E
\left[
\left(
D_\kappa-P_\kappa^\top q
\right)^2
\mid W
\right].
\]
If \(q=\theta_0\) minimized \(Q_W(q)\) almost surely, then it would satisfy the
first-order condition
\[
\mathbb E
\left[
P_\kappa
\left(
D_\kappa-P_\kappa^\top\theta_0
\right)
\mid W
\right]
=
0
\]
almost surely. But $D_\kappa=P_\kappa^\top\theta_0+\epsilon_\kappa$, so this first-order condition is equivalent to $\mathbb E[P_\kappa\epsilon_\kappa\mid W]=0$. This condition fails on a positive-probability event. Therefore $\theta_0=\mathbb E[\Theta\mid W]$ is not a population minimizer of the refined risk.

The case in which there exist \(u_0\neq u_1\) and \(c\) such that $r(b,c,u_0)\neq r(b,c,u_1)$ is identical after swapping the roles of \(a\) and \(b\).

Combining the two cases, every nonconstant refinement $R=r(\kappa,D_a,D_b)$
admits a data-generating process satisfying the model assumptions, including
\eqref{ass:zero_mean} for all indices, under which either the refined population
risk has nonunique minimizers distinct from $\mathbb E[\Theta\mid W]$,
or $\mathbb E[\Theta\mid W]$
is not a population minimizer. Hence the refined population risk does not
identify the conditional causal target.
\end{proof}

\section{Experimental setup details for Section~\ref{sec:eval_design_alts}}
\label{app:eval_design_alts_setup}

Each task $i$ follows a linear demand model
\[
D_{ik} \;=\; \theta_i^0 + \theta_i^1\,p_{ik} + \varepsilon_{ik}, \qquad k=1,\ldots,K.
\]
We set $\mathbb{E}[\theta_i^0] = 10$ and $\mathbb{E}[\theta_i^1]=-1$, and sample task parameters with coefficient of variation $0.1$:
\[
\theta_i^0 \sim \mathcal{N}\big(10,\,1\big), 
\qquad
\theta_i^1 \sim \mathcal{N}\big(-1,\,(0.1)^2\big).
\]
We add multiplicative demand noise with coefficient of variation $0.1$:
\[
\varepsilon_{ik}\,|\,(\theta_i,p_{ik}) \sim \mathcal{N}\Big(0,\;\big(0.1\,|\theta_i^0+\theta_i^1 p_{ik}|\big)^2\Big).
\]

Prices are chosen by a manager using a noisy signal of the revenue-optimal price under the linear model. Let
\[
p_i^\ast \;=\; \arg\max_{p}\; p(\theta_i^0+\theta_i^1 p)
\;=\; -\frac{\theta_i^0}{2\theta_i^1}
\]
be the true optimum. The manager forms an optimal price signal
\[
\tilde p_i^\ast \;=\; p_i^\ast + \xi_i,
\qquad
\xi_i \sim \mathcal{N}\Big(0,\;\big(\sigma_c\,|p_i^\ast|\big)^2\Big),
\]
where $\sigma_c \in \{0,\,0.1,\,0.2\}$ is the \emph{confounding-strength parameter}. Smaller $\sigma_c$ implies the manager more accurately targets $p_i^\ast$, and prices are more tightly coupled to latent demand fundamentals; larger $\sigma_c$ injects quasi-exogenous variation and weakens confounding.

Finally, in each period $k$ the manager experiments locally around $\tilde p_i^\ast$:
\[
p_{ik} \;=\; \tilde p_i^\ast + \nu_{ik},
\qquad
\nu_{ik} \sim \mathcal{N}\Big(0,\;\big(0.1\,|\tilde p_i^\ast|\big)^2\Big).
\]
We evaluate all methods at $K=2$ and report MSE for $\theta_i^0$ and $\theta_i^1$ in Table~\ref{tab:k2_mse_both}.

\begin{table}[t]
\centering
\scriptsize
\setlength{\tabcolsep}{4pt}

\caption{$K=2$ MSE under varying confounding strength $\sigma_c$.
Entries report mean MSE $\pm 1.96\,\mathrm{SE}$.}
\label{tab:k2_mse_both}

\begin{tabular}{lccc|ccc}
\toprule
& \multicolumn{3}{c|}{{Slope MSE} ($\theta_i^1$)} & \multicolumn{3}{c}{{Intercept MSE} ($\theta_i^0$)} \\
Method & $\sigma_c=0$ (HC) & $0.1$ (MC) & $0.2$ (LC) & $\sigma_c=0$ (HC) & $0.1$ (MC) & $0.2$ (LC) \\
\midrule
DCMOML  & 0.0329 $\pm$ 0.0044 & 0.0487 $\pm$ 0.0062 & 0.1191 $\pm$ 0.0282
       & 1.242 $\pm$ 0.098   & 1.511 $\pm$ 0.101   & 1.878 $\pm$ 0.171 \\
DCUOML  & 0.2575 $\pm$ 0.0160 & 0.3931 $\pm$ 0.0214 & 0.5038 $\pm$ 0.0255
       & 6.848 $\pm$ 0.400   & 10.20 $\pm$ 0.50    & 13.23 $\pm$ 0.66 \\
EB-GLS  & 0.7323 $\pm$ 0.0097 & 0.4180 $\pm$ 0.0076 & 0.1365 $\pm$ 0.0031
       & 18.63 $\pm$ 0.24    & 10.73 $\pm$ 0.19    & 3.535 $\pm$ 0.081 \\
Meta    & 0.1285 $\pm$ 0.0088 & 0.0949 $\pm$ 0.0124 & 0.1768 $\pm$ 0.0421
       & 3.304 $\pm$ 0.228   & 2.266 $\pm$ 0.222   & 2.605 $\pm$ 0.326 \\
DCML    & 3.080 $\pm$ 0.126   & 3.226 $\pm$ 0.129   & 3.290 $\pm$ 0.108
       & 72.50 $\pm$ 3.21    & 73.65 $\pm$ 2.90    & 68.13 $\pm$ 2.29 \\
Shared  & 1.002 $\pm$ 0.010   & 0.5831 $\pm$ 0.0093 & 0.2034 $\pm$ 0.0041
       & 25.84 $\pm$ 0.26    & 15.24 $\pm$ 0.24    & 5.732 $\pm$ 0.113 \\
TaskOLS & 707.0 $\pm$ 129.6   & 832.4 $\pm$ 135.6   & 832.8 $\pm$ 148.3
       & 1.82e+04 $\pm$ 3.2e+03 & 1.85e+04 $\pm$ 2.7e+03 & 1.57e+04 $\pm$ 2.6e+03 \\
\bottomrule
\end{tabular}
\end{table}

\section{Implementation Details for Retail Experiments}
\label{app:retail_impl}

This appendix complements Section~\ref{sec:retail_eval} by documenting practical
implementation choices that are not fully specified there, including  the shared neural architecture used by all
methods, and the exposure-weighted MSE objective used for training/validation.
\subsection{Exposure-weighted supervision}
\label{app:retail_loss_impl}

We explain exposure-weighted supervision we use for the task. As discussed in the main text, each outcome $D_{ik}$ is an average over $e_{ik}$ days and is
therefore heteroskedastic, with longer exposures yielding lower-variance averages. We incorporate
this by minimizing a weighted MSE in which each supervised query exposure is weighted in
proportion to its exposure length, with weights normalized within-product.

Let $\mathcal{Q}_i=\{1,2\}$ denote the set of query indices used in the training loss for product $i$. For each product in a
minibatch, we define normalized weights
\[
\tilde w_{ik} \;=\; \frac{e_{ik}}{\sum_{j\in\mathcal{Q}_i} e_{ij}}, \qquad k\in\mathcal{Q}_i,
\]
and the per-product loss
\[
\mathcal{L}_i \;=\; \sum_{k\in\mathcal{Q}_i}\tilde w_{ik}\,\big(\hat D_{ik}-D_{ik}\big)^2,
\qquad
\hat D_{ik}=\theta_{i0}+\theta_{i1}p_{ik}.
\]
The overall training objective is the minibatch average of $\mathcal{L}_i$. The same weighted MSE
is used on the validation set for early stopping under an identical criterion across methods.

\subsection{Model architecture shared across methods}
\label{app:retail_arch_impl}

All methods use the same parametric form for demand at price $p$:
\[
\hat D_i(p) \;=\; \theta_{i0}+\theta_{i1}p,
\]
where $\Theta_i=(\theta_{i0},\theta_{i1})$ is predicted by a feedforward neural network. The input
vector differs by method (per Section~\ref{sec:retail_eval}), but the predictor network
architecture is shared.

\paragraph{MLP for $\Theta_i$.}
We use a two-hidden-layer MLP (multi-layer perceptron):
\[
\texttt{Linear}(d_{\mathrm{in}},256)\rightarrow\texttt{ReLU}\rightarrow
\texttt{Linear}(256,256)\rightarrow\texttt{ReLU}\rightarrow
\texttt{Linear}(256,2),
\]
where $d_{\mathrm{in}}$ is the method-specific input dimension. The output layer produces two real
values interpreted directly as $(\theta_{i0},\theta_{i1})$.

\end{document}